\newtheoremstyle{colon}%
{}
{}
{\itshape}
{}
{\bfseries}
{:}
{ }
{}
\theoremstyle{colon}
	\newtheorem{theorem}{Theorem}
\xpatchcmd{\proof}{\@addpunct{.}}{\@addpunct{:}}{}{}
\newcommand*{\trans}{^{\mathsf{T}}}
\DeclareMathOperator*{\argmin}{arg\,min} 
\title{Imbalanced Data Clustering using Equilibrium K-Means}
\author{{Yudong~He}
}
\begin{document}
\maketitle

\begin{abstract}
Centroid-based clustering algorithms, such as hard K-means (HKM) and fuzzy K-means (FKM), have suffered from learning bias towards large clusters. Their centroids tend to be crowded in large clusters, compromising performance when the true underlying data groups vary in size (i.e., imbalanced data). To address this, we propose a new clustering objective function based on the Boltzmann operator, which introduces a novel centroid repulsion mechanism, where data points surrounding the centroids repel other centroids. Larger clusters repel more, effectively mitigating the issue of large cluster learning bias. The proposed new algorithm, called equilibrium K-means (EKM), is simple, alternating between two steps; resource-saving, with the same time and space complexity as FKM; and scalable to large datasets via batch learning. We substantially evaluate the performance of EKM on synthetic and real-world datasets. The results show that EKM performs competitively on balanced data and significantly outperforms benchmark algorithms on imbalanced data. Deep clustering experiments demonstrate that EKM is a better alternative to HKM and FKM on imbalanced data as more discriminative representation can be obtained. Additionally, we reformulate HKM, FKM, and EKM in a general form of gradient descent and demonstrate how this general form facilitates a uniform study of K-means algorithms.
\end{abstract}

\begin{IEEEkeywords}
K-means, fuzzy clustering, imbalance learning, uniform effect, deep clustering
\end{IEEEkeywords}

\IEEEpeerreviewmaketitle

\section{Introduction}
\label{sect1}
Imbalanced data refers to the true underlying groups of data having different sizes, which is common in datasets of medical diagnosis, fraud detection, and anomaly detection. Imbalanced data poses a challenge for learning algorithms because these algorithms tend to be biased towards the majority group~\cite{krawczyk2016learning}. While there is a considerable amount of research on supervised learning (e.g., classification) from imbalanced data~\cite{he2009learning, ramyachitra2014imbalanced,tanha2020boosting}, unsupervised learning has not been as thoroughly explored, because the unknown cluster sizes make the task more difficult~\cite{lu2019self}. Methods like resampling and boosting frequently used in supervised learning cannot be applied in unsupervised learning due to the lack of labels.

Clustering is an important unsupervised learning task involving grouping data into clusters based on similarity. K-means (KM) is the most popular clustering technique, valued for its simplicity, scalability, and effectiveness with real datasets. It can also be used as an initialization method for more advanced clustering techniques, such as the Gaussian mixture model (GMM)~\cite{press2007gaussian,shireman2017examining}. KM starts with an initial set of centroids (cluster centers) and iteratively refines them to increase cluster compactness. The hard KM (HKM, or Lloyd's algorithm)~\cite{macqueen1967some,lloyd1982least} and fuzzy KM (FKM, or Bezdek's algorithm)~\cite{bezdek2013pattern} are the two most representative KM algorithms. 
HKM assigns a data point to only one cluster, while FKM assigns a data point to multiple clusters with varying degrees of membership.

HKM and FKM stimulate many subsequent research. For example, the possibilistic K-means (PKM)~\cite{krishnapuram1993possibilistic} was proposed with reformulated membership more meaningful in terms of typicality. Later, the possibilistic fuzzy K-means (PFKM)~\cite{pal2005possibilistic} was proposed to address the noise sensitivity defect of FKM and overcome the coincident clusters problem of PKM. FKM-$\sigma$~\cite{tsai2011fuzzy} was proposed to improve FKM's performance on data points with uneven variations or non-spherical shapes in individual clusters. Fuzzy local information K-means~\cite{krinidis2010robust} was designed to promote FKM's performance in image segmentation. Recently, feature-weighted PKM (FWPKM)~\cite{yang2020feature,yang2021collaborative} was proposed to give non-uniform importance to features. Research of combining K-means and kernel mechanisms~\cite{zhang2003clustering,huang2011multiple,tang2023knowledge} has garnered interest, which non-linearly mapped data from low-dimensional space to high-dimensional space through kernel functions to enhance data separability. Conversely, deep clustering, a modern technique that integrates deep neural networks (DNNs) and K-means (or other clustering algorithms), has been proposed to cluster high-dimensional data by mapping them to a low-dimensional space to overcome the curse of dimensionality~\cite{coates2012learning,yang2017towards,caron2018deep,fard2020deep}.

Although many variations derived from HKM and FKM have been developed to deal with different situations, most display degraded effectiveness in cases of imbalanced data. This is due to the so-called ``uniform effect" that causes the clusters generated by these algorithms to have similar sizes even when input data has highly varying group sizes~\cite{xiong2006k}. It is reported that FKM has a stronger uniform effect than HKM~\cite{zhou2020effect}. An illustration of the uniform effect of HKM and FKM is given in Fig.~\ref{fig:ue} where we can observe that the centroids of HKM and FKM crowd together in the large cluster.

\begin{figure}[!t]
\centering
\subfloat[]{\includegraphics[width=0.24\textwidth]{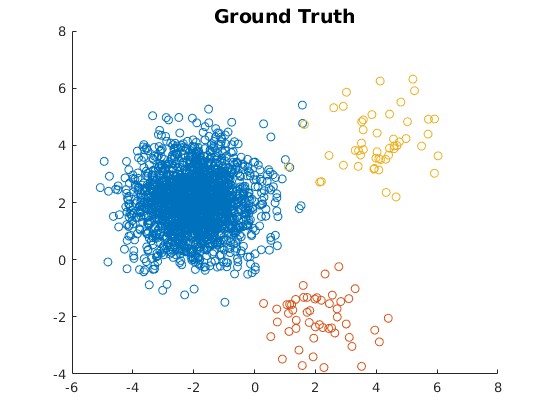}\label{fig:ue_dataset}}
 \subfloat[]{\includegraphics[width=0.24\textwidth]{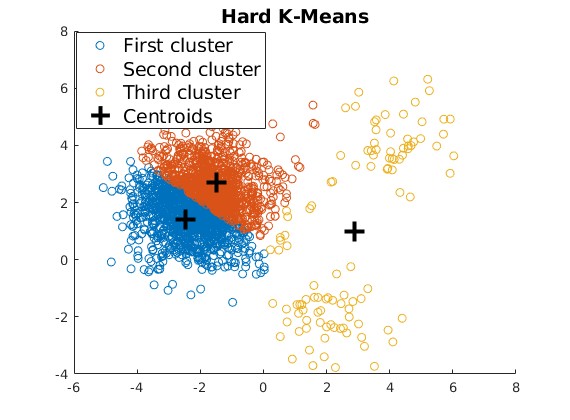}
 \label{fig:ue_km}}
 \hfill
 \subfloat[]{\includegraphics[width=0.24\textwidth]{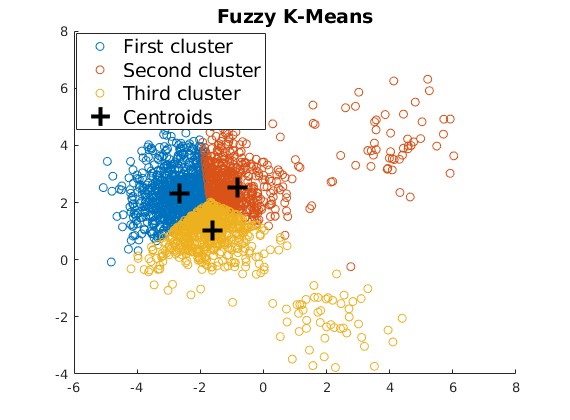}\label{fig:ue_fcm}}
\subfloat[]{\includegraphics[width=0.24\textwidth]{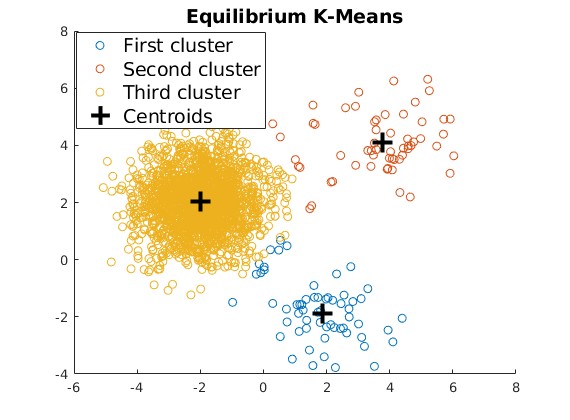}\label{fig:ue_ekm}}
 \caption{Clustering results of a highly imbalanced dataset. (a) Ground truth. The colors represent the reference labels of the data points. (b) Clustering by hard K-means. (c) Clustering by fuzzy K-means. (d) Clustering by the proposed equilibrium K-means. They are all two-step alternating algorithms that iteratively update centroids.}
\label{fig:ue}
\end{figure}

\subsection{Existing Efforts to Overcome Uniform Effect}
There are two popular methods to overcome the uniform effect. The first method is to introduce more weight on the data points in small clusters at each updating iteration, biasing learning towards them, like a modified FKM called cluster-size insensitive FKM (csiFKM) developed by Noordam et al.~\cite{noordam2002multivariate}. Later, Lin et al.~\cite{lin2014size} proposed a size-insensitive integrity-based FKM (siibFKM) based on csiFKM to reduce the sensitivity of csiFKM to the distance between adjacent clusters. However, weighting based on cluster size inadvertently increases the influence of outliers, making these algorithms sensitive to noise~\cite{askari2021fuzzy}. 

The second method is called multiprototype clustering. This method first groups data into multiple subclusters with similar sizes and the final clusters are obtained by merging adjacent subclusters. Liang et al.~\cite{liang2012k} proposed a multiprototype clustering algorithm that employs FKM to generate subclusters. Later, Lu et al.~\cite{lu2019self} proposed a self-adaptive multiprototype clustering algorithm that automatically adjusts the number of subclusters. However, multiprototype clustering algorithms have a complex process and high time complexity of $O(N^2)$, where $N$ is the number of data points in the dataset. Thus, they are computationally expensive for large datasets. We should additionally mention that Zeng et al.~\cite{zeng2023soft} recently proposed a soft multiprototype clustering algorithm with time complexity linear to $N$. However, their clustering process remains complex and is aimed at clustering high-dimensional and complex-structured data rather than imbalanced data.

\subsection{Our Contributions}
In this paper, we propose equilibrium K-means (EKM), which addresses the imbalanced data clustering issue by a new centroid repulsion mechanism without the above limitations. Our contributions can be summarized in three main aspects.

First, we reformulate HKM, FKM, the maximum-entropy fuzzy clustering (MEFC)~\cite{karayiannis1994meca,li1995maximum}, and EKM in a general form of gradient descent. We show that these algorithms aim to optimize different approximations of the same objective. This general form facilitates the uniform study of KM algorithms.

Second, we develop EKM based on the first contribution where the Boltzmann operator is utilized as an approximation method. Such approximation brings a brand-new centroid repulsion mechanism. Data surrounding centroids repel other centroids, and large clusters repel more as they contain more data, successfully reducing the uniform effect by preventing centroids from crowding together in a large cluster (see Fig.~\ref{fig:ue_ekm} for an example). Hence, EKM is robust to imbalanced data and not sensitive to noise. Similar to HKM and FKM, EKM is a two-step alternating algorithm that iteratively computes centroids. In addition to having the same time ($O(N)$) and space complexity as FKM, EKM has a batch-learning version that can be applied to large datasets. A comprehensive study is conducted using four synthetic and 16 real datasets (hence 20 datasets in total) to demonstrate the effectiveness of EKM on balanced and imbalanced data.

Finally, we investigate the combination of DNNs and EKM. We find that mapping high-dimensional data via DNNs to an EKM-friendly space can result in more discriminative low-dimensional representation than mapping to an HKM-friendly space. Compared to the combination of DNNs and HKM, using EKM improves clustering accuracy by 35\% on an imbalanced dataset derived from MNIST.

\subsection{Organization}
We introduce HKM, FKM, and MEFC in Section~\ref{sect2}. In Section~\ref{sect3}, we derive their general form and the proposed EKM. The properties of EKM and its centroid repulsion mechanism are studied in Section~\ref{sect4}. We evaluate the performance of EKM on classic clustering tasks in Section~\ref{sect5} and on deep clustering in Section~\ref{sect6}. Finally, we conclude in Section~\ref{sect7}.
\section{K-Means and Its Variations}
\label{sect2}
\subsection{The Hard K-Means Algorithm (Lloyd's Algorithm)}
KM aims to partition $N$ data points into $K$ clusters, minimizing the sum of the variances within each cluster. Mathematically, the objective can be expressed as:
\begin{equation}
\label{kmeans_obj}
     \underset{\mathbb{S}_1,\cdots,\mathbb{S}_K}{\arg\min} \sum_{k=1}^K \sum_{\mathbf{x}\in \mathbb{S}_k} \|\mathbf{x}-\boldsymbol{\mu}_k\|_2^2,
\end{equation}
where $\mathbb{S}_k$ represents the set of data points in the $k$-th cluster, $\mathbf{x}\in\mathbb{S}_k$ denotes a data point belonging to $\mathbb{S}_k$, $\boldsymbol{\mu}_k$ is the centroid of $\mathbb{S}_k$, expressed by 
\begin{equation}
    \boldsymbol{\mu}_k=\frac{1}{|\mathbb{S}_k|}\sum_{\mathbf{x}\in \mathbb{S}_k} \mathbf{x},
\end{equation}
$|\mathbb{S}_k|$ signifies the size of $\mathbb{S}_k$, and $\|\cdot\|_2$ is the $l_2$ norm. This optimization problem is NP-hard~\cite{aloise2009np} and is commonly solved heuristically by Lloyd‘s algorithm~\cite{lloyd1982least}. Given $K$ points ($\mathbf{c}_1^{(1)},\cdots,\mathbf{c}_K^{(1)})$ as initial centroids, Lloyd's algorithm alternates between two steps:
\begin{enumerate}
    \item Assign each data point to the nearest centroid, forming $K$ clusters:
    \begin{equation}
        \mathbb{S}^{(\tau)}_k=\{\mathbf{x}:\|\mathbf{x}-\mathbf{c}_k^{(\tau)}\|_2^2\le \|\mathbf{x}-\mathbf{c}_i^{(\tau)}\|_2^2 \,\forall i,\,1\le i \le K\}
    \end{equation}
    \item Recalculate the centroid of each cluster by taking the mean of all data points assigned to it:
    \begin{equation}
        \mathbf{c}_k^{(\tau+1)}=\frac{1}{|\mathbb{S}_k^{(\tau)}|}\sum_{\mathbf{x}\in \mathbb{S}_k^{(\tau)}} \mathbf{x}.
    \end{equation}
\end{enumerate}
Lloyd's algorithm converges when the assignment of data points to clusters ceases to change, or when a maximum number of iterations is reached. The time complexity of one iteration of the above two steps is $O(NK)$. The HKM algorithm mentioned in this paper refers to Lloyd's algorithm.

\subsection{The Uniform Effect of K-Means}
The uniform effect refers to the propensity to generate clusters of similar sizes.  It is essentially a learning bias towards large clusters and is implicitly implied in the objective~\eqref{kmeans_obj} of KM. For simplicity, kindly consider a case with two clusters (i.e., $K=2$). Minimizing the objective function~\eqref{kmeans_obj} is equivalent to maximizing the following objective function~\cite{wu2012advances}:
\begin{equation}
    \max_{\mathbb{S}_1,\,\mathbb{S}_2} N_1N_2\|\boldsymbol{\mu}_1-\boldsymbol{\mu}_2\|_2^2,
\end{equation}
where $N_1$ and $N_2$ denote the sizes of $\mathbb{S}_1$ and $\mathbb{S}_2$, respectively. By isolating the effect of $\|\boldsymbol{\mu}_1-\boldsymbol{\mu}_2\|_2^2$, maximizing the above objective leads to $N_1=N_2=N/2$, indicating KM tends to produce equally sized clusters.

\subsection{The Fuzzy K-Means Algorithm (Bezdek's Algorithm)}
FKM attempts to minimize the sum of weighted distances between data points and centroids, with the following objective and constraints~\cite{bezdek2013pattern}:
\begin{equation}
    \begin{aligned}
    \label{fcm obj}
    \min_{\mathbf{c}_1,\cdots,\mathbf{c}_K,\{u_{kn}\}}  \quad & \sum_{n=1}^N \sum_{k=1}^K (u_{kn})^m \|\mathbf{x}_n-\mathbf{c}_k\|_2^2 \\
    \textrm{subject to} \quad & u_{kn}\in [0,\,1]\,\forall k,n,\\
    & \sum_{k=1}^K u_{kn}=1\, \forall n, \\
    & 0<\sum_{n=1}^N u_{kn}<N\, \forall k, \\
    \end{aligned}
\end{equation}
where $\mathbf{x}_n$ represents the $n$-th data point, $\mathbf{c}_k$ denotes the $k$-th centroid, $u_{kn}$ is a coefficient called membership that indicates the degree of $\mathbf{x}_n$ belonging to the $k$-th cluster, and $m\in(1,+\infty)$ is a hyperparameter controlling the degree of fuzziness level. Similar to the HKM algorithm, the FKM algorithm operates by alternating two steps:
\begin{enumerate}
    \item Calculate the membership value of the $n$-th data point belonging to the $k$-th cluster:
    \begin{equation}
    \label{fcm_step1}
        u_{kn}^{(\tau)}=\frac{1}{\sum_{i=1}^K \bigg(\frac{\|\mathbf{x}_n-\mathbf{c}_k^{(\tau)}\|_2}{\|\mathbf{x}_n-\mathbf{c}_i^{(\tau)}\|_2}\bigg)^{\frac{2}{m-1}}}.
    \end{equation}

    \item Recalculate the weighted centroid of the $k$-th cluster by:
    \begin{equation}
    \label{fcm_step2}
        \mathbf{c}_k^{(\tau+1)}=\frac{\sum_n (u_{kn}^{(\tau)})^m \mathbf{x}_n}{\sum_n (u_{kn}^{(\tau)})^m}.
    \end{equation}
\end{enumerate}
The time complexity of one iteration of the above two steps is $O(NK^2)$. The higher time complexity of FKM than HKM is due to the extra membership calculation.

\subsection{Maximum-Entropy Fuzzy Clustering}
Karayiannis~\cite{karayiannis1994meca} added an entropy term to the objective function of FKM, resulting in MEFC. The new objective function is given as follows:
\begin{equation}
    \begin{aligned}
    \label{meca obj}
    \min_{\mathbf{c}_1,\cdots,\mathbf{c}_K,\{u_{kn}\}}  \quad & \eta\sum_{n=1}^N\sum_{k=1}^K u_{kn}\ln{u_{kn}}\\
    & +(1-\eta)\frac{1}{N}\sum_{n=1}^N \sum_{k=1}^K u_{kn}\|\mathbf{x}_n-\mathbf{c}_k\|_2^2 \\
    \textrm{subject to} \quad & u_{kn}\in [0,\,1]\,\forall k,n,\\
    & \sum_{k=1}^K u_{kn}=1\, \forall n, \\
    \end{aligned}
\end{equation}
where $\eta\in(0,1)$ is a hyperparameter, controlling the transition from maximization of the entropy to the minimization of centroid-data distances. MEFC is similar to FKM but with a different definition of membership:
\begin{equation}
\label{mefc_membership}
    u^{(\tau)}_{kn}=\frac{\exp{(-\lambda \|\mathbf{x}_n-\mathbf{c}_k^{(\tau)}\|_2^2)}}{\sum_{i=1}^K \exp{(-\lambda \|\mathbf{x}_n-\mathbf{c}_i^{(\tau)}\|_2^2)}},
\end{equation}
where $\lambda=\frac{1}{N} \frac{1-\eta}{\eta}$. The time complexity of MEFC is the same as FKM, which is $O(NK^2)$ for one iteration.



\section{Smooth K-Means - A Unified Framework}
\label{sect3}
\subsection{Objective of Smooth K-Means}
We propose a novel framework called smooth K-means (SKM) and demonstrate that the three KM algorithms introduced in the previous section are special cases of SKM.

Denote the squared Euclidean distance\footnote{Other distance metrics, such as the absolute difference and the angle between points, can also be used to define $d_{kn}$. Considering the derivation process is the same, we use the Euclidean distance in the paper for simplicity.} between the $k$-th centroid and the $n$-th data point as
\begin{equation}
     d_{kn}=\frac{1}{2}\|\mathbf{x}_n-\mathbf{c}_k\|_2^2,
\end{equation}
and define the within-cluster sum of squares (WCSS) as the sum of squared Euclidean distances between data points and their nearest centroids, i.e.,
\begin{equation}
\label{WCSS}
    \text{WCSS}:=\sum_{n=1}^N \min(d_{1n},\cdots,d_{Kn}).
\end{equation}
The goal of SKM is to find $K$ centroids that minimize an approximated WCSS, resulting in the following optimization problem:
\begin{equation}
     \underset{\mathbf{c}_1,\cdots,\mathbf{c}_K} \arg \min \sum_{n=1}^N h(d_{1n},\cdots,d_{Kn}),
\end{equation}
where $h(d_{1n},\cdots,d_{Kn})$ is a smooth approximation to $\min(d_{1n},\cdots,d_{Kn})$ and is referred to as the smooth minimum function. Below we reveal the relationship between the SKM and the GMM.

\subsection{The Relationship Between SKM and GMM}
The centroids obtained by minimizing WCSS are maximum likelihood estimators (MLEs) of parameters of ``hard" GMM. The derivation is as follows. Assuming that the dataset $(\mathbf{x}_1,\cdots,\mathbf{x}_N)$ is sampled from $K$ independent multivariate normal distributions. We denote the mean vector and covariance matrix of the $k$-th normal distribution as $\boldsymbol{\mu}_k$ and $\mathbf{\Sigma}_k$, respectively. The standard GMM has the following well-known likelihood
\begin{equation}
    \begin{aligned}
        \label{Likelihood_HGMM}
        L(\boldsymbol{\mu},\mathbf{\Sigma},p\,|\,\mathbf{x})\propto &\prod_{n=1}^N \sum_{k=1}^K  p_{kn}\det(\mathbf{\Sigma}_k)^{-1/2}\\
        &\exp{\bigl(-\frac{1}{2}(\mathbf{x}_n-\boldsymbol{\mu}_k)\trans \mathbf{\Sigma}_k^{-1} (\mathbf{x}_n-\boldsymbol{\mu}_k)\bigl)},
    \end{aligned}
\end{equation}
where $p_{kn}$ is the probability that the $n$-th data point is generated from the $k$-th normal distribution. The standard GMM assumes that data points are generated from multiple normal distributions based on a certain probability distribution. On the other hand, the hard GMM assumes that a data point is generated from a single normal distribution. Accordingly, for all $n$, $p_{kn}=1$ for one $k\in \{1,\cdots,N\}$ and $p_{in}=0$ if $i\ne k$. We further assume that all normal distributions have the same covariance matrix, specifically an identity matrix. Under these assumptions, the MLEs of $\{p_{kn}\}$ are $\hat{p}_{kn}=1$ if 
$k=\arg\min_{i\in \{1,\cdots,K\}} \|\mathbf{x}_n-\boldsymbol{\mu}_i\|_2^2$, and $\hat{p}_{kn}=0$ otherwise. By substituting $\mathbf{\Sigma}_k=\mathbf{I}$ and $\hat{p}_{kn}$ into~\eqref{Likelihood_HGMM} and taking the logarithmic value, we obtain the log-likelihood:
\begin{equation}
    l(\boldsymbol{\mu}\,|\,\mathbf{x})\propto -\sum_{n=1}^N \min(\|\mathbf{x}_n-\boldsymbol{\mu}_1\|_2^2,\cdots,\|\mathbf{x}_n-\boldsymbol{\mu}_K\|_2^2).
\end{equation}
Now it is clear that the MLEs of $\{\boldsymbol{\mu}_1,\cdots,\boldsymbol{\mu}_K\}$ are the centroids minimizing WCSS. 

The hard GMM model simplifies the standard GMM by only considering the impact of a data point on its closest centroid. However, as seen in the success of FKM, it can be more advantageous to consider the impact of a data point on all centroids. This can be accomplished in SKM by applying an approximation function to smooth WCSS. Hence, SKM can be viewed as a model between hard and standard GMM. In the following sections, we will introduce three common smooth minimum functions and explore the resulting distinct clustering algorithms.

\subsection{Three Common Smooth Minimum Functions}
Assume a monotonically increasing and differentiable function $f: [0,\,+\infty) \mapsto [0,\,+\infty)$ satisfies
\begin{equation}
    \lim_{x\to +\infty} \frac{x}{f(x)}\to 0,
\end{equation}
or equivalently
\begin{equation}
    \lim_{x\to +\infty} \frac{1}{f'(x)}\to 0,
\end{equation}
where $f'$ is the first derivative of $f$. Let $g(x)=1/f(x)$, a smooth minimum function $h_1: [0,\,+\infty)^K \mapsto [0,+\infty)$ can be constructed by 
\begin{equation}
    h_1(x_1,\cdots,x_K)=g^{-1}(g(x_1)+\cdots+g(x_K)).
\end{equation}

Let $f(x)$ be defined as $e^{\lambda x}$, then the function $h_1$ takes on a specific form known as LogSumExp:
\begin{equation}
\label{LSE}
\begin{aligned}    h_1(x_1,\cdots,x_K)&=\text{LSE}_\lambda(x_1,\cdots,x_K)\\
    &=-\frac{1}{\lambda}\ln(e^{-\lambda x_1}+\cdots+e^{-\lambda x_K}),
\end{aligned}
\end{equation}
where $\lambda$ is a parameter controlling the degree of approximation, with $\text{LSE}_\lambda \to \min$ as $\lambda \to +\infty$.

Define $f(x)=x^p$, we can have another common smooth minimum function, called p-Norm, which has the following specific form
\begin{equation}
\label{PNorm}
\begin{aligned}
    h_1(x_1,\cdots,x_K)&=\text{PN}_p(x_1,\cdots,x_K)\\
    &=(x_1^{-p}+\cdots+x_K^{-p})^{-1/p},
\end{aligned}
\end{equation}
and converges to $\min(x_1,\cdots,x_K)$ as $p\to +\infty$. An additional definition is necessary for the domain of the p-Norm function to legally include zeros: $\text{PN}_p(x_1,\cdots,x_K)=0$ if some $x_i=0$ where $1\le i \le K$. 

Smooth minimum functions can also be constructed by
\begin{equation}
    h_2(x_1,\cdots,x_K)=\frac{x_1 g(x_1)+\cdots+x_K g(x_K)}{g(x_1)+\cdots+g(x_K)}.
\end{equation}
A specific example is the Boltzmann operator, where $g(x)=e^{-\alpha x}$. The Boltzmann operator takes on the form of
\begin{equation}
\label{Boltzmann}
    \begin{aligned}
        h_2(x_1,\cdots,x_K)&=\text{boltz}_\alpha (x_1,\cdots,x_K)\\
        &= \frac{\sum_{i=1}^K x_i e^{-\alpha x_i}}{\sum_{i=1}^K e^{-\alpha x_i}},
    \end{aligned}
\end{equation}
and converges to the minimum function as $\alpha\to +\infty$.

\subsection{Clustering by Minimizing Smoothed WCSS}
\label{sect3-D}
\subsubsection{The relationship between LogSumExp and Maximum-Entropy Fuzzy Clustering}
Approximating WCSS~\eqref{WCSS} by LogSumExp~\eqref{LSE}, we have the following objective:
\begin{equation}
\label{LSE objective}
    \min_{\mathbf{c}_1,\cdots,\mathbf{c}_K} J_{\text{LSE}}(\mathbf{c}_1,\cdots,\mathbf{c}_K)=\sum_{n=1}^N
    -\frac{1}{\lambda} \ln{(\sum_{k=1}^K e^{-\lambda d_{kn}})}.
\end{equation}

The differentiable objective function $J_{\text{LSE}}$ has the first-order partial derivative:
\begin{equation}
\begin{aligned}
   \partial_{\mathbf{c}_k}{J_{\text{LSE}}}=\frac{\partial{J_{\text{LSE}}}}{\partial{\mathbf{c}_k}}&=\sum_{n=1}^N \frac{e^{-\lambda d_{kn}}}{\sum_{i=1}^K e^{-\lambda d_{in}}}\frac{\partial{d_{kn}}}{\partial{\mathbf{c}_k}}\\
   &=-\sum_{n=1}^N \frac{e^{-\lambda d_{kn}}}{\sum_{i=1}^K e^{-\lambda d_{in}}}(\mathbf{x}_n-\mathbf{c}_k).
\end{aligned}
\end{equation}

The minimizer of $J_{\text{LSE}}$ can be found by gradient descent iteration:
\begin{equation}
\label{LSE-gradient-iteration}
    \mathbf{c}_k^{(\tau+1)}=\mathbf{c}_k^{(\tau)}-\gamma^{(\tau)}_k  \partial_{\mathbf{c}_k}{J_{\text{LSE}}}(\mathbf{c}_k^{(\tau)}),
\end{equation}
where $\gamma^{(\tau)}_k$ is the learning rate at the $\tau$-th iteration. This updating procedure~\eqref{LSE-gradient-iteration} is equivalent\footnote{It should be noted that the equivalence mentioned in this paper is in the sense of the algorithm level, not in the criterion level.} to the MEFC algorithm if one set 
\begin{equation}
\label{learning_rate_LSE}
    \gamma^{(\tau)}_k=1/\sum_{n=1}^N\bigg(\frac{e^{-\lambda d_{kn}^{(\tau)}}}{\sum_{i=1}^K e^{-\lambda d_{in}^{(\tau)}}}\bigg).
\end{equation}
Such learning rate value is related to the second-order partial derivative of $J_{\text{LSE}}$, which we will discuss later. The membership~\eqref{mefc_membership} of MEFC is identical to $\partial J_\text{LSE}/\partial d_{kn}$.

\subsubsection{Towards Lloyd's algorithm}
In the limit of $\lambda\to\infty$, $\frac{e^{-\lambda d_{kn}}}{\sum_i e^{-\lambda d_{in}}}\to 1$ if $d_{kn}\le d_{in} \, \forall i\in\{1,\cdots,K\}$, and $\frac{e^{-\lambda d_{kn}}}{\sum_i e^{-\lambda d_{in}}}\to 0$ otherwise\footnote{We assume $\forall n$, if $k\ne i$, then $d_{kn}\ne d_{in}$.}. In this case, the learning rate $\gamma_k^{(\tau)}\to 1/N_k^{(\tau)}$ where $N_k^{(\tau)}$ is the number of data points closest to $\mathbf{c}_k^{(\tau)}$, and the updating procedure~\eqref{LSE-gradient-iteration} approaches to Lloyd's algorithm.

\subsubsection{Lloyd's algorithm and Newton's method}
The smooth objective function $J_{\text{LSE}}$ possesses the second-order partial derivative given by
\begin{equation}
\label{second_partial_LSE}
\begin{aligned}
\partial_{\mathbf{c}_k}^2{J_{\text{LSE}}}=\frac{\partial^2{J_{\text{LSE}}}}{\partial{\mathbf{c}_k}^2}
    &=-\sum_{n=1}^N \bigg(
     \frac{e^{-\lambda d_{kn}}}{\sum_i e^{-\lambda d_{in}}}\frac{\partial^2 d_{kn}}{\partial \mathbf{c}_k^2}\\
     &+\frac{-\lambda e^{-\lambda d_{kq}}\sum_{i\ne k}e^{-\lambda d_{in}}}{(\sum_i e^{-\lambda d_{in}})^2}\frac{\partial d_{kn}}{\partial \mathbf{c}_k}(\frac{\partial d_{kn}}{\partial \mathbf{c}_k})\trans
     \bigg)
     \\
     &=\sum_{n=1}^N \frac{e^{-\lambda d_{kn}}}{\sum_i e^{-\lambda d_{in}}}(\mathbf{I}+\xi_{kn}\mathbf{D}_{kn})
\end{aligned}
\end{equation}
where $\xi_{kn}=\frac{\lambda \sum_{i\ne k}e^{-\lambda d_{in}}}{\sum_i e^{-\lambda d_{in}}}$ and $\mathbf{D}_{kn}=(\mathbf{x}_n-\mathbf{c}_k)(\mathbf{x}_n-\mathbf{c}_k)\trans$. By employing Newton's method, the minimizer of $J_{\text{LSE}}$ can be iteratively found using
\begin{equation}
\label{LSE-Newton-iteration}
    \mathbf{c}_k^{(\tau+1)}=\mathbf{c}_k^{(\tau)}- [\partial_{\mathbf{c}_k}^2{J_{\text{LSE}}}(\mathbf{c}_k^{(\tau)})]^{-1}  \partial_{\mathbf{c}_k}{J_{\text{LSE}}}(\mathbf{c}_k^{(\tau)}).
\end{equation}
As $\lambda \to +\infty$, $\partial_{\mathbf{c}_k}^2{J_{\text{LSE}}(\mathbf{c}_k^{(\tau)})}\to N_k^{(\tau)} \mathbf{I}$, and the updating procedure~\eqref{LSE-Newton-iteration} approaches Lloyd's algorithm. This indicates that Lloyd's algorithm is essentially Newton's method, which aligns with the perspective presented in~\cite{bottou1994convergence}. There is a close relationship between the learning rate value~\eqref{learning_rate_LSE} and the second-order partial derivative of $J_\text{LSE}$~\eqref{second_partial_LSE}: the gradient descent with the learning rate of ~\eqref{learning_rate_LSE} is equivalent to an approximated Newton's method in the sense that one term (the rank-1 matrix $\mathbf{D}_{kn}$) in~\eqref{second_partial_LSE} is ignored. A discussion of the pros and cons of optimizing $J_\text{LSE}$ using gradient descent or Newton's method would be valuable, but it is beyond the scope of this paper. In the following section, we will observe the same connection between Newton's method and FKM.

\subsubsection{The relationship between p-Norm and fuzzy K-means}
By substituting the minimum function in WCSS with the $p$-Norm function~\eqref{PNorm}, we have
\begin{equation}
\label{PNorm objective}
    \min_{\mathbf{c}_1,\cdots,\mathbf{c}_K} J_{\text{PN}}(\mathbf{c}_1,\cdots,\mathbf{c}_K)=\sum_{n=1}^N (d_{1n}^{-p}+\cdots+d_{Kn}^{-p})^{-1/p}.
\end{equation}

The objective $J_\text{PN}$ has the first partial derivative of
\begin{equation}
    \partial_{\mathbf{c}_k}{J_{\text{PN}}}=\frac{\partial{J_{\text{PN}}}}{\partial{\mathbf{c}_k}}=-\sum_{n=1}^N \frac{d_{kn}^{-p-1}}{(\sum_{i=1}^K d_{in}^{-p})^{1/p+1}}(\mathbf{x}_n-\mathbf{c}_k).
\end{equation}
The minimizer of $J_\text{PN}$ can be located using gradient descent iteration
\begin{equation}
\label{PN-gradient-iteration}
    \mathbf{c}_k^{(\tau+1)}=\mathbf{c}_k^{(\tau)}-\gamma^{(\tau)}_k  \partial_{\mathbf{c}_k}{J_{\text{PN}}}(\mathbf{c}_k^{(\tau)}).
\end{equation}
When setting $p=1/(m-1)$ and
\begin{equation}
    \gamma^{(\tau)}_k=1/\sum_{n=1}^N \frac{(d_{kn}^{(\tau)})^{-p-1}}{(\sum_{i=1}^K (d_{in}^{(\tau)})^{-p})^{1/p+1}},
\end{equation}
this gradient descent iteration is equivalent to the FKM algorithm. The membership~\eqref{fcm_step1} of FKM is equivalent to a power exponent ($1/m$) of $\partial J_\text{PN}/\partial d_{kn}$.

The connection between FKM and Newton's method becomes evident by taking the second-order partial derivative of $J_\text{PN}$, which is
\begin{equation}
    \partial^2_{\mathbf{c}_k}{J_{\text{PN}}}=\frac{\partial^2{J_{\text{PN}}}}{\partial{\mathbf{c}_k}^2}=\sum_{n=1}^N \frac{d_{kn}^{-p-1}}{(\sum_i d_{in}^{-p})^{1/p+1}}(\mathbf{I}+\zeta_{kn}\mathbf{D}_{kn}),
\end{equation}
where $\zeta_{kn}=(p+1)\frac{\sum_{i\ne k}d_{in}^{-p}}{d_{kn}\sum_i d_{in}^{-p}}$.  Hence, FKM can also be viewed as an approximated Newton's method that ignores the term of the rank-1 matrix $\mathbf{D}_{kn}$.
\subsection{From Boltzmann Operator to A Novel Clustering Algorithm}
\label{sect3-E}
Employing the Boltzmann operator to smooth WCSS results in
\begin{equation}
\label{Boltzmann objective}
    \min_{\mathbf{c}_1,\cdots,\mathbf{c}_K} J_{\text{B}}(\mathbf{c}_1,\cdots,\mathbf{c}_K)=\sum_{n=1}^N \frac{\sum_{i=1}^K d_{in} e^{-\alpha d_{in}}}{\sum_{i=1}^K e^{-\alpha d_{in}}}.
\end{equation}

The objective $J_B$ possesses the first-order partial derivative of
\begin{equation}
\begin{aligned}
    \partial_{\mathbf{c}_k} J_\text{B}=\frac{\partial J_{\text{B}}}{\partial \mathbf{c}_k}=-\sum_{n=1}^N & \frac{e^{-\alpha d_{kn}}}{\sum_i e^{-\alpha d_{in}}} \big[1-\alpha(d_{kn}\\
    &-\frac{\sum_i d_{in}e^{-\alpha d_{in}}}{\sum_i e^{-\alpha d_{in}}})\big](\mathbf{x}_n-\mathbf{c}_k).
\end{aligned}
\end{equation}
The minimizer of $J_B$ can be found using gradient descent iteration
\begin{equation}
    \mathbf{c}_k^{(\tau+1)}=\mathbf{c}_k^{(\tau)}-\gamma^{(\tau)}_k \partial_{\mathbf{c}_k} J_{\text{B}}(\mathbf{c}_1^{(\tau)},\cdots,\mathbf{c}_K^{(\tau)}),
\end{equation}
where 
\begin{equation}
    \gamma^{(\tau)}_k=1/\bigl(\sum_{n=1}^N \frac{e^{-\alpha d_{kn}^{(\tau)}}}{\sum_i e^{-\alpha d_{in}^{(\tau)}}} [1-\alpha(d_{kn}^{(\tau)}-\frac{\sum_i d_{in}^{(\tau)}e^{-\alpha d_{in}^{(\tau)}}}{\sum_i e^{-\alpha d_{in}^{(\tau)}}})]\bigl).
\end{equation}
This updating procedure which we call EKM can be reformulated into a two-step iteration procedure akin to FKM:

\begin{enumerate}
    \item Calculate the weight value of the $n$-th data point to the $k$-th cluster by
    \begin{equation}
    \label{ekm_step1}
        w_{kn}^{(\tau)}=\frac{e^{-\alpha d_{kn}^{(\tau)}}}{\sum_{i=1}^K e^{-\alpha d_{in}^{(\tau)}}} [1-\alpha(d_{kn}^{(\tau)}-\frac{\sum_{i=1}^K d_{in}^{(\tau)}e^{-\alpha d_{in}^{(\tau)}}}{\sum_{i=1}^K e^{-\alpha d_{in}^{(\tau)}}})].
    \end{equation}

    \item Recalculate the weighted centroid of the $k$-th cluster by
    \begin{equation}
    \label{ekm_step2}
        \mathbf{c}_k^{(\tau+1)}=\frac{\sum_n w_{kn}^{(\tau)} \mathbf{x}_n}{\sum_n w_{kn}^{(\tau)}}.
    \end{equation}
\end{enumerate}
EKM converges when centroids cease to change or the maximum number of iterations is reached. The time complexity of one iteration of the above two steps is $O(NK^2)$. For convenience, we summarise the complete procedure of EKM in Algorithm~\ref{alg:ekm}.
\begin{algorithm}
    \caption{Equilibrium K-Means Algorithm}          
    \label{alg:ekm}   
    \KwIn{
        A dataset $X=\{\mathbf{x}_n\}_{n=1}^N$, cluster number $K$, initial centroids $\{\mathbf{c}_k^{(0)}\}_{k=1}^K$,
        smoothing parameter $\alpha$}
    \KwOut {Centroids $\{\mathbf{c}_1,\cdots,\mathbf{c}_K\}$}

    $\tau=0$\;
   \Repeat{convergence}
    {
    Compute weight $w_{kn}^{(\tau)}$ by~\eqref{ekm_step1} for all $k,\,n$\;
    Update centroid $\mathbf{c}_k^{(\tau+1)}$ by~\eqref{ekm_step2} for all $k$\;
    $\tau=\tau+1$\;
    }
    \Return{$\{\mathbf{c}_k^{(\tau)}\}_{k=1}^K$}
    \end{algorithm}
\subsection{Physical Interpretation of Equilibrium K-Means}
The second law of thermodynamics asserts that in a closed system with constant external parameters (e.g., volume) and fixed entropy, the internal energy will reach its minimum value at the state of thermal equilibrium. The objective~\eqref{Boltzmann objective} of EKM follows this minimum energy principle. This connection can be established by envisioning data points as particles with discrete/quantized energy levels, where the number of energy levels is equivalent to the number of centroids, and the energy value corresponds to the squared Euclidean distance between a data point and a centroid. 

Boltzmann’s law tells that at the state of thermodynamic equilibrium, the probability of a particle occupying a specific energy level decreases exponentially with the increase of the energy value of that level. Hence, the objective function~\eqref{Boltzmann objective} equals the expectation of the entire system's energy, and EKM seeks centroids to minimize this energy expectation. Due to this connection, we refer to the proposed Algorithm~\ref{alg:ekm} as equilibrium K-means.

\subsection{Membership Defined in Equilibrium K-Means}
\label{sect3-G}
Membership can be defined in EKM. Although the sum of weights~\eqref{ekm_step1} of EKM equals one, i.e.,
\begin{equation}
    \sum_{k=1}^K w_{kn}^{(\tau)}=1,\, \forall n,
\end{equation}
it is worth noting that these weights cannot be interpreted as probabilities or memberships since some weight values are negative. We define membership of EKM from its physical perspective. From the physical interpretation of EKM, the exponential term $e^{-\alpha d_{kn}}$ can be interpreted as the unnormalized probability of the $n$-th data point belonging to the $k$-th clusters. Hence, the membership of the $n$-th data point to the $k$-th cluster can be defined as
\begin{equation}
    u_{kn} = \frac{e^{-\alpha d_{kn}}}{\sum_{i=1}^K e^{-\alpha d_{in}}}.
\end{equation}
Since we have no intention in this paper to discuss the membership (it is just an intermediate product), we only define the membership in EKM and leave further discussion to subsequent research.

\subsection{Convergence of Smooth K-Means}
HKM, FKM, MEFC, and EKM are special cases of SKM, which can be generalized as the following gradient descent algorithm:
\begin{equation}
\label{smooth_km_sgd}
    \mathbf{c}_k^{(\tau+1)}=\mathbf{c}_k^{(\tau)}-\gamma^{(\tau)}_k \partial_{\mathbf{c}_k} J(\mathbf{c}_1^{(\tau)},\cdots,\mathbf{c}_K^{(\tau)}),
\end{equation}
where $J(\mathbf{c}_1^{(\tau)},\cdots,\mathbf{c}_K^{(\tau)})=\sum_{n=1}^N h(d_{1n}^{(\tau)},\cdots,d_{Kn}^{(\tau)})$, $h$ is a smooth minimum function, $d_{kn}^{(\tau)}=\frac{1}{2}\|\mathbf{x}_n-\mathbf{c}_k^{(\tau)}\|_2^2$, and the learning rate $\gamma^{(\tau)}_k$ is given by
\begin{equation}
    \gamma^{(\tau)}_k=1/\sum_{n=1}^N \bigg( \frac{\partial h}{\partial d_{kn}}(d_{1n}^{(\tau)},\cdots,d_{Kn}^{(\tau)})\bigg).
\end{equation}
Different KM algorithms can be obtained by taking $h$ the corresponding explicit form (refer to Section~\ref{sect3-D} to \ref{sect3-E}). This general form facilitates the uniform study of KM algorithms. Below we give a convergence guarantee conditioning on the properties of $h$:
\begin{theorem}[Convergence Condition]
\label{theorem_convergence}
    The centroid sequence obtained by~\eqref{smooth_km_sgd} converges to a (local) minimizer or saddle point of the objective function $J$ if the following conditions can be satisfied:
    \begin{enumerate}
        \item (Concavity) The function $h$ is a concave function at its domain $[0,\,+\infty)^K$.
        \item (Boundness) The function $h$ has a lower bound, i.e., $h>-\infty$, and the learning rate set $\{\gamma^{(\tau)}_k\}_{\tau,k}$ has a positive lower bound, i.e., $\exists \epsilon>0$, such that $\gamma^{(\tau)}_k\ge\epsilon$ for all $\tau$ and $k$.
    \end{enumerate}
\end{theorem}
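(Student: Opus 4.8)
The plan is to recognize the update~\eqref{smooth_km_sgd} as an \emph{exact} majorization--minimization (MM) scheme and then run the standard ``sufficient decrease $\Rightarrow$ vanishing gradient'' argument, with the two hypotheses supplying precisely the two ingredients MM needs: concavity yields a valid surrogate, and the lower bounds turn monotone descent into stationarity.

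First I would build a surrogate for $J$ at the current iterate $\mathbf{c}^{(\tau)}$. Since $h$ is concave on $[0,+\infty)^K$, it lies below each of its tangent planes, so writing $w_{kn}^{(\tau)}=\partial h/\partial d_{kn}$ evaluated at $\mathbf{c}^{(\tau)}$ and $d_{kn}=\tfrac12\|\mathbf{x}_n-\mathbf{c}_k\|_2^2$,
\[
h(d_{1n},\cdots,d_{Kn}) \le h(d_{1n}^{(\tau)},\cdots,d_{Kn}^{(\tau)}) + \sum_{k=1}^K w_{kn}^{(\tau)}\bigl(d_{kn}-d_{kn}^{(\tau)}\bigr).
\]
Summing over $n$ gives a majorizer $Q(\mathbf{c}\mid\mathbf{c}^{(\tau)})\ge J(\mathbf{c})$ that touches $J$ in value and gradient at $\mathbf{c}^{(\tau)}$. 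As a function of $\mathbf{c}$, $Q$ splits across clusters, its $k$-th block being the quadratic with Hessian $\bigl(\sum_n w_{kn}^{(\tau)}\bigr)\mathbf{I}=(1/\gamma_k^{(\tau)})\mathbf{I}$; this is positive definite exactly because $\gamma_k^{(\tau)}>0$, and crucially it needs only positivity of the \emph{sum} of weights, not of the individual $w_{kn}^{(\tau)}$ (which may be negative, as in EKM). Hence $Q(\cdot\mid\mathbf{c}^{(\tau)})$ is strictly convex with unique minimizer the weighted centroid $\sum_n w_{kn}^{(\tau)}\mathbf{x}_n\big/\sum_n w_{kn}^{(\tau)}$, and a short computation identifies this minimizer with the $\mathbf{c}^{(\tau+1)}$ produced by~\eqref{smooth_km_sgd}. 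The chain $J(\mathbf{c}^{(\tau+1)})\le Q(\mathbf{c}^{(\tau+1)}\mid\mathbf{c}^{(\tau)})\le Q(\mathbf{c}^{(\tau)}\mid\mathbf{c}^{(\tau)})=J(\mathbf{c}^{(\tau)})$ then gives monotone descent.

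Next I would make the decrease quantitative. Because $\mathbf{c}^{(\tau+1)}$ minimizes the strictly convex quadratic surrogate,
\[
J(\mathbf{c}^{(\tau)})-J(\mathbf{c}^{(\tau+1)}) \ge Q(\mathbf{c}^{(\tau)}\mid\mathbf{c}^{(\tau)})-Q(\mathbf{c}^{(\tau+1)}\mid\mathbf{c}^{(\tau)}),
\]
and evaluating the right-hand side and using $\mathbf{c}_k^{(\tau)}-\mathbf{c}_k^{(\tau+1)}=\gamma_k^{(\tau)}\partial_{\mathbf{c}_k}J(\mathbf{c}^{(\tau)})$ turns it into
\[
\sum_{k=1}^K \frac{1}{2\gamma_k^{(\tau)}}\bigl\|\mathbf{c}_k^{(\tau)}-\mathbf{c}_k^{(\tau+1)}\bigr\|_2^2 = \sum_{k=1}^K \frac{\gamma_k^{(\tau)}}{2}\bigl\|\partial_{\mathbf{c}_k}J(\mathbf{c}^{(\tau)})\bigr\|_2^2 \ge \frac{\epsilon}{2}\bigl\|\nabla J(\mathbf{c}^{(\tau)})\bigr\|_2^2,
\]
where the last step uses the learning-rate floor $\gamma_k^{(\tau)}\ge\epsilon$. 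Since $h>-\infty$ forces $J\ge N\inf h>-\infty$, the monotone sequence $\{J(\mathbf{c}^{(\tau)})\}$ converges and the telescoped decreases are summable; therefore $\sum_\tau\|\nabla J(\mathbf{c}^{(\tau)})\|_2^2<\infty$ and $\nabla J(\mathbf{c}^{(\tau)})\to\mathbf{0}$. This is the crux: concavity licenses the surrogate, the lower bound on $h$ gives summability, and the floor $\epsilon$ is what converts a vanishing \emph{decrease} into a vanishing \emph{gradient}.

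Finally I would pass from $\nabla J\to\mathbf{0}$ to the stated critical point. Assuming the iterates remain bounded---automatic for the nonnegative-weight instances (HKM, FKM, MEFC), where each update is a convex combination of the data---any subsequential limit $\mathbf{c}^{(\tau_j)}\to\mathbf{c}^\ast$ satisfies $\nabla J(\mathbf{c}^\ast)=\mathbf{0}$ by continuity, so every accumulation point is stationary. It cannot be a strict local maximizer, since there $J<J(\mathbf{c}^\ast)$ off $\mathbf{c}^\ast$ while the non-increasing objective obeys $J(\mathbf{c}^{(\tau)})\ge\lim_\tau J(\mathbf{c}^{(\tau)})=J(\mathbf{c}^\ast)$, pinning the iterate to $\mathbf{c}^\ast$ itself; hence the limit is a local minimizer or a saddle point. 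I expect the real obstacle to be upgrading this \emph{subsequential} statement to convergence of the whole sequence: $\|\nabla J\|\to0$ alone does not prevent the iterates from drifting along a connected continuum of stationary points, and closing that gap rigorously needs an extra structural assumption---isolated/locally unique critical points or a {\L}ojasiewicz-type inequality for $J$. I would therefore either invoke such a hypothesis or state the conclusion as subsequential convergence to the stated critical set, relying throughout on the floor $\epsilon$ to keep $\sum_n w_{kn}^{(\tau)}$ uniformly positive so the surrogate never degenerates.
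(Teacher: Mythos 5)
Your proposal is correct and follows essentially the same route as the paper's own proof: the paper's function $M(\mathbf{c}_1,\cdots,\mathbf{c}_K)$ is exactly your surrogate $Q(\cdot\mid\mathbf{c}^{(\tau)})$, and the paper likewise derives the sufficient decrease $J^{(\tau)}-J^{(\tau+1)}\ge\frac{1}{2}\epsilon\sum_k\|\partial_{\mathbf{c}_k}J^{(\tau)}\|_2^2$ and concludes stationarity from the vanishing gradient. Your closing caveat --- that passing from $\nabla J(\mathbf{c}^{(\tau)})\to\mathbf{0}$ to convergence of the whole centroid sequence requires boundedness of the iterates and either isolated critical points or a {\L}ojasiewicz-type condition --- identifies a step the paper's proof asserts without justification, so your version is in fact the more careful one.
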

\begin{proof}
    See the Appendix for the proof, which generalizes the proof of convergence of fuzzy K-means in~\cite{groll2005new}.
\end{proof}
It can be easily verified that when the smooth minimum function $h$ is LogSumExp~\eqref{LSE} and p-Norm~\eqref{PNorm} in which case SKM~\eqref{smooth_km_sgd} is equivalent to MEFC and FKM, respectively, the above convergence condition can be satisfied with any initial centroids. Since the Boltzmann operator~\eqref{Boltzmann} is not globally concave, discussing the convergence of EKM is not simple. If centroids are updated within the concave region of the Boltzmann operator, the convergence of EKM can be guaranteed by Theorem~\ref{theorem_convergence}. However, the practical situation may be more complex. A rigorous convergence condition of EKM may involve a tedious discussion of a specific value of $\alpha$, positions of initial centroids, and data structures. Instead, we demonstrate the empirical convergence of EKM in Section~\ref{sect5:convergence}.
\section{Comparison of Different Smoothed Objectives}
\label{sect4}
\subsection{Case Study}
This section presents an empirical analysis of the behavior of different KM algorithms by examining their reformulated objective functions in some examples with well-designed data structures. Datasets comprising two classes of one-dimensional data points are generated by sampling from two normal distributions, drawing $N_1$ samples from a distribution with mean $\mu_1$ and unit variance, and $N_2$ samples from another with mean $\mu_2$ and unit variance. Using different parameter combinations, we generate four datasets: 1. A balanced, non-overlapping dataset ($N_1=N_2=50$, $\mu_1=-5$, and $\mu_2=+5$; Fig.~\ref{fig:balanced_non-overlapping}); 2. A balanced, overlapping dataset ($N_1=N_2=50$, $\mu_1=-0.5$, and $\mu_2=+0.5$; Fig.~\ref{fig:balanced_overlapping}); 3. An imbalanced, non-overlapping dataset ($N_1=5000$, $N_2=50$, $\mu_1=-5$, and $\mu_2=+5$; Fig.~\ref{fig:imbalanced_non-overlapping}); 4. An imbalanced, overlapping dataset ($N_1=2000$, $N_2=50$, $\mu_1=-2$, and $\mu_2=+2$; Fig.~\ref{fig:compact_balanced}).

We plot $J(\mathbf{c}_1=\mu_1, \mathbf{c}_2)$ as a function of $\mathbf{c}_2$ in Fig.~\ref{fig:analysis_objs_1D}, with $\mathbf{c}_2$ being the position of the second centroid on the $x$-axix, and $J$ being the reformulated objectives of HKM (WCSS~\eqref{WCSS}), FKM ($J_{\text{PN}}$~\eqref{PNorm objective}), MEFC ($J_{\text{LSE}}$~\eqref{LSE objective}), and EKM ($J_{\text{B}}$~\eqref{Boltzmann objective}). The four objective functions behave similarly on the first two balanced datasets, but it is worth noting that the last two imbalanced datasets. Fig.~\ref{fig:imbalanced_non-overlapping} and Fig.~\ref{fig:compact_balanced} show that the local and global minimum points of the objective functions of HKM, FKM, and MEFC are biased towards the center of the large cluster, i.e., $\mu_1$. In contrast, EKM does not have an obvious local minimum and its global minimum point aligns with the true cluster center, i.e., $\mu_2$, highlighting EKM's superiority in handling imbalanced data.

\begin{figure}[!t]
\centering
\subfloat[]{\includegraphics[width=0.24\textwidth]{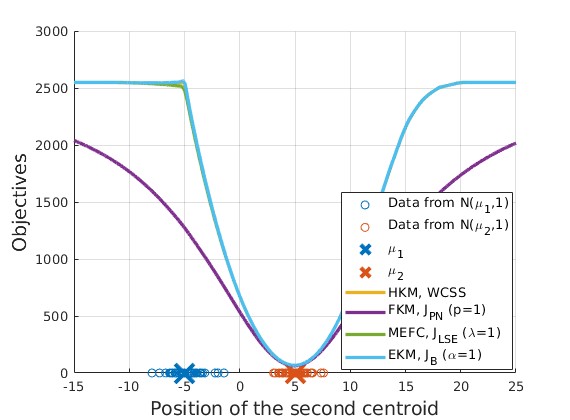}\label{fig:balanced_non-overlapping}}
 \subfloat[]{\includegraphics[width=0.24\textwidth]{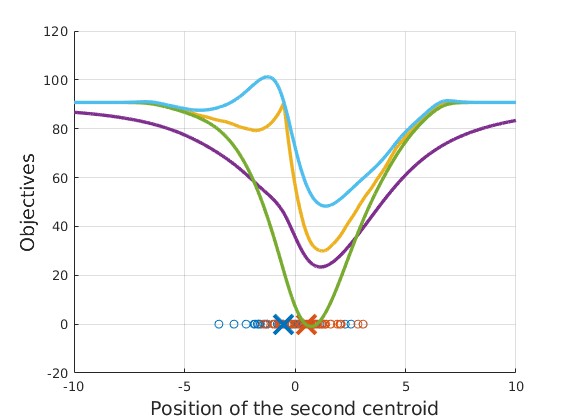}
 \label{fig:balanced_overlapping}}
 \hfill
 \subfloat[]{\includegraphics[width=0.24\textwidth]{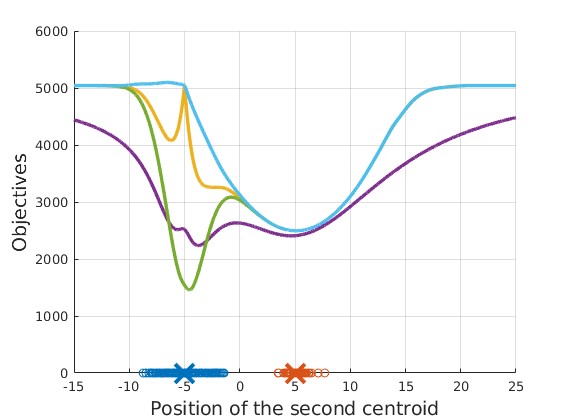}\label{fig:imbalanced_non-overlapping}}
\subfloat[]{\includegraphics[width=0.24\textwidth]{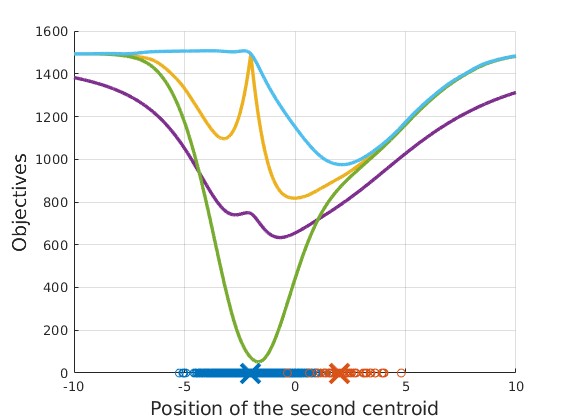}\label{fig:compact_balanced}}
 \caption{Objectives $J({\mathbf{c}_1=\mu_1,\mathbf{c}_2})$ as a function of $\mathbf{c}_2$ (the position of the second centroid on the $x$-axis). The yellow, purple, green, and blue curves are reformulated objective functions of HKM, FKM, MEFC, and EKM, respectively. (a), (b), (c), and (d) present objective functions under different data distributions.}
\label{fig:analysis_objs_1D}
\end{figure}

\subsection{The Analysis of EKM's Effectiveness on Imbalanced Data: A Centroid Repulsion Mechanism}
\label{sect4_2}
We analyze the effectiveness of EKM on imbalanced data based on the gradient of its objective. The gradient of the smoothed WCSS with respect to the data-centroid distance (i.e., $\partial J/\partial d_{kn}$) can be interpreted as the force exerted by a spring. A positive gradient value represents an attractive force, while a negative value represents a repulsive force.

A simple example is given to plot the gradient values of different KM's objectives. We fix two centroids at $-1$ and $+1$, respectively, and move a data point along the $x$-axis. In Fig.~\ref{fig:memerbship_all} we display the gradients of WCSS, $J_\text{LSE}$, $J_\text{PN}$, and $J_\text{B}$ with respect to the distance between the data point and the second centroid. As evident from the figure, data points on the side of the first centroid do not impact the second centroid of HKM, but they do attract the second centroids of FKM and MEFC. This finding supports the claim in~\cite{zhou2020effect} that FKM has a stronger uniform effect than HKM. On the other hand, data points near the first centroid have repulsive forces on the second centroid of EKM, which compensate for the attraction from other data points. This new mechanism effectively reduces the algorithm's learning bias towards other clusters, particularly large ones, because the centers of large clusters often contain more data. The existence of this mechanism suggests that the uniform effect of EKM is weak, a hypothesis that our experiments will test. Notably, data points near the second centroid of EKM exert the strongest attraction forces on this centroid. As a result, EKM's centroids are stabilized by their surrounding data points, reducing their susceptibility to noise and outliers.

\begin{figure}[!t]
    \centering
    \subfloat[]{\includegraphics[width=0.24\textwidth]{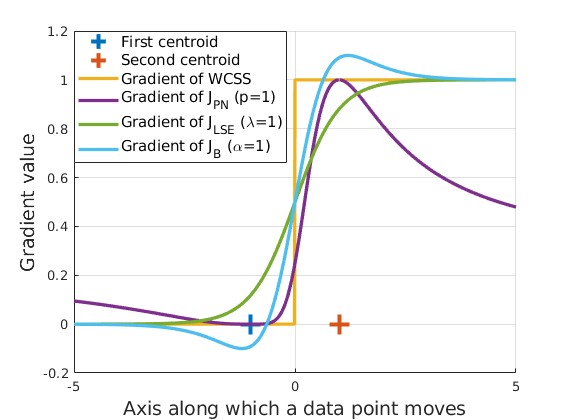} \label{fig:memerbship_all}}
    \subfloat[]{\includegraphics[width=0.24\textwidth]{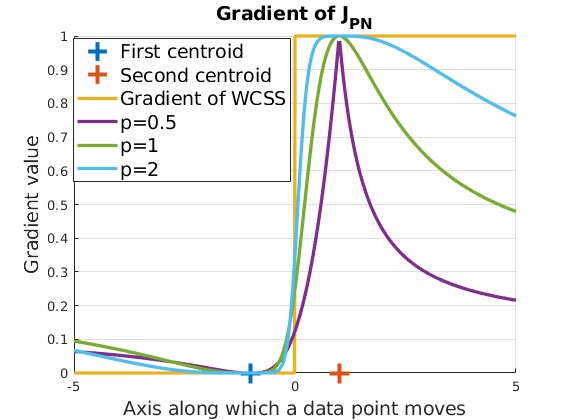} \label{fig:membership_pn}}
    \hfill
    \subfloat[]{\includegraphics[width=0.24\textwidth]{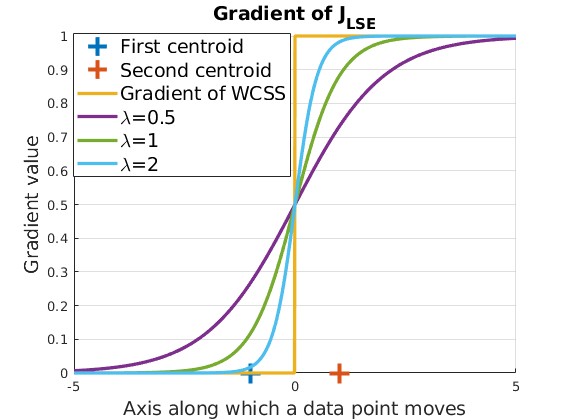} \label{fig:membership_lse}}
    \subfloat[]{\includegraphics[width=0.24\textwidth]{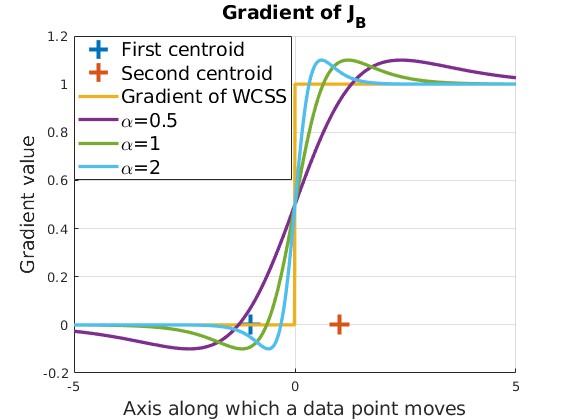} \label{fig:membership_boltzmann}}
    \caption{Gradients as a function of a data point moving along the $x$-axis. (a) Gradients of WCSS, $J_{\text{PN}}$, $J_{\text{LSE}}$, and $J_{\text{B}}$ with fixed smoothing parameters. (b), (c), and (d) are gradients of $J_{\text{PN}}$, $J_{\text{LSE}}$, and $J_{\text{B}}$ with varied smoothing parameters, respectively.}
    \label{fig:membership}
\end{figure}

\subsection{The Choice of EKM's parameter}
\label{sect3_alpha}
The smoothing parameter $\alpha$ impacts the performance of EKM, but the optimal choice remains unknown. This is not a difficulty unique to EKM. The FKM algorithm also struggles with selecting the optimal fuzzifier value, $m$. Despite numerous studies discussing the selection of $m$, a widely accepted solution has yet to be found~\cite{gupta2019fuzzy}.

As a rule of thumb, when the dimension of the data space is less than or equal to three, setting $\alpha=1$ appears effective after normalizing the data to have a zero mean and unit variance for each dimension. As the data space dimension increases, the data-centroid distance $d_{kn}$ increases, necessitating a decrease in $\alpha$ to ensure that the exponential term $e^{-\alpha d_{kn}}$ falls within a normal range. Hence, one can set $\alpha$ as a reciprocal of the data variance or manually tune it as follows. Initially setting $\alpha$ as ten times the reciprocal of data variance and gradually reducing it until a sudden increase in centroid-centroid distance is observed. Because the increase in centroid distance implies the emergence of repulsive forces that reduce the uniform effect.
\section{Numerical Experiments}
\label{sect5}
\subsection{Experimental Setup}
Numerical experiments are conducted to compare the performance of our proposed EKM algorithm with seven related centroid-based algorithms including (1) HKM, (2) FKM, (3) MEFC, (4) PFKM~\cite{pal2005possibilistic}, (5) csiFKM~\cite{noordam2002multivariate}, (6) siibFKM~\cite{lin2014size}, and (7) FWPKM~\cite{yang2020feature}. Multiprototype clustering algorithms (e.g.,~\cite{liang2012k,lu2019self}) are not appropriate as baseline algorithms because they are too complex to be benchmarks for gauging the efficiency of EKM. We also avoid using hybrid methods, such as algorithms combining K-means with kernels proposed in~\cite{zhang2003clustering,huang2011multiple,tang2023knowledge}, as benchmarks to ensure fairness. Implementation is conducted in Matlab R2022a and the operation system is Ubuntu 18.04.1 LTS with Intel Core i9-9900K CPU @ 3.60GHz X 16 and 62.7GiB memory. All datasets used and codes are available at \url{https://github.com/ydcnanhe/Imbalanced-Data-Clustering-using-Equilibrium-K-Means.git}.

The experimental datasets contain four artificial datasets generated by us (including Data-A, Data-B, Data-C, and Data-D), 13 UCI~\cite{asuncion2007uci} datasets (consisting of Image Segmentation (IS), Seeds, Wine, Rice, Wisconsin Diagnostic Breast Cancer (WDBC), Ecoli, Htru2, Zoo, Glass, Shill Bidding, Anuran Calls, Occupancy Detection, Machine Failure), and three Kaggle datasets (incorporating Heart Disease, Pulsar Cleaned, and Bert-Embedded Spam). So there are 20 datasets and Table~\ref{table: dscrp_datasets} provides their information, including name, instance number, feature number, reference class number, and coefficient of variation (CV). CV is used in previous literature~\cite{wu2012advances} to measure the level of data dispersion. It is calculated as the ratio of the standard deviation of class sizes to the mean. Given the number of instances in each class as $N_1,\cdots, N_K$, we have
\begin{equation}
    \text{CV}=s/\Bar{N},
\end{equation}
where
\begin{equation*}
    \bar{N}= \frac{\sum_{k=1}^K N_k}{K},\, s = \sqrt{\frac{\sum_{k=1}^K (N_k-\bar{N})^2}{K-1}}.
\end{equation*}
In~\cite{wu2012advances}, a CV value exceeding 1 indicates highly varying class sizes, while a value below 0.3 signifies uniform class sizes. However, there isn't a widely accepted critical CV value implying that data is imbalanced. For rigorous statements, we establish that if a dataset's CV is less than 0.4, it is considered balanced. Conversely, if the CV exceeds 0.7, the dataset is deemed imbalanced. Consequently, we have six balanced datasets and 14 imbalanced datasets. It should be noted that none of the datasets used has a CV value between 0.4 and 0.7, hence this range remains undefined.

\begin{table}[!t]
\caption{
Detailed description of 20 Datasets
\label{table: dscrp_datasets}}
\centering
{%
    \resizebox{0.45\textwidth}{!}{
        {\begin{tabular}{cccccc}\hline
		ID&Name&Instances&Feature&Classes&CV\\ \hline
            D1&Data-A &2250&2&3&1.4468\\
            D2&Data-B &2250&2&3&1.4468\\
            D3&Data-C &5200&2&2&1.3054\\  
            D4&Data-D &5400&2&9&2.7500\\ \hline \hline
            D5&IS &2310 &19&7 &0\\
            D6&Seeds&210&7&3&0\\
            D7& Heart Disease& 1125& 13& 2&0.0373\\
            D8&Wine &178 &13&3 &0.1939 \\  
            D9&Rice &3810 &7&2 &0.2042 \\
            D10&WDBC &569 &30&3 &0.3604\\ \hline \hline
            D11&Zoo &101 &16&7 &0.8937\\
            D12&Glass &214 &9 &6 &1.0767\\
            D13&Ecoli &336 &7&8 &1.1604\\   
		  D14&Htru2 &17898 &8&2 &1.1552\\
            D15&Shill Bidding& 6321& 9& 2&1.1122\\          
            D16&Anuran Calls& 7195& 22& 10&1.6016\\   
            D17&Occupancy Detection& 20560& 5& 2&0.7608\\  
            D18&Machine Failure & 9815 &7 &2 &1.3318 \\
            D19&Pulsar Cleaned &14987 &7 &2 &1.3561 \\
            D20&Bert-Embedded Spam & 5572 &768 &2 &1.0350 \\ \hline
	\end{tabular}
	}
    }
 }
\end{table}

All datasets undergo normalization, ensuring that each feature has zero mean and unit variance. All features are used for clustering purposes. The number of clusters is set to the reference class number. Convergence is achieved when the moving distance of centroids between successive iterations is sufficiently small relative to the magnitude of centroids, i.e.,
\begin{equation}
\label{convergence_condition}
   \frac{\bigg(\sum_{k=1}^K \|\mathbf{c}_k^{(\tau)}-\mathbf{c}_{k}^{(\tau-1)}\|_2^2\bigg)^{1/2}}{\bigg(\sum_{k=1}^K \|\mathbf{c}_k^{(\tau)}\|_2^2\bigg)^{1/2}}\le 1\mathrm{e}{-3}. 
\end{equation}
We set the maximum number of iterations to 500 to prevent algorithms from infinitely iterating due to failure to converge. Except for sporadic cases, all algorithms can reach the specified convergence within 500 iterations.

Centroids are initialized using the K-means++ algorithm~\cite{arthur2007k}. Given that each run of K-means++ yields different outputs and the objectives of the tested clustering algorithms are known to be non-convex (i.e., multiple local optima exist), convergence may happen at one of the local optimal points. A common way of finding the global optimum is to carry out a number of replications followed by a selection of the best (lowest) objective value. Hence, each trial includes 100 repetitions, and we select the repetition with the lowest objective value as the final result for that trial. We average the performance of 50 trials to ensure the rationality of the experiment. This means we conduct $50\times100=5000$ runs for each algorithm. 100 repetitions in each trial is still within the practical executable range and each trial can give almost consistent results as indicated by a small standard deviation. We utilize widely accepted measures of clustering performance for our evaluation indexes, including the normalized mutual information (NMI)~\cite{strehl2002cluster}, the adjusted rand index (ARI)~\cite{yeung2001details}, and the clustering accuracy index (ACC)~\cite{graves2010kernel}. Both NMI and ACC range from 0 to 1, with 0 indicating the worst, and 1 representing the best. ARI has a range from -1 to 1, in which -1 means two data clusterings are completely dissimilar, 0 indicates that data clusterings are essentially random, and 1 means two data clusterings are perfectly aligned.

FKM, PFKM, csiFKM, siibFKM, and FWPKM employ a typical fuzzifier value of $m=2$~\cite{huang2012range}. The fuzzifier value of MEFC is set to $\lambda=1$. For EKM, we set $\alpha=1$ for the four artificial datasets. For the 16 real-world datasets with varying feature numbers, the parameter $\alpha$ is set proportional to the data variance, as follows
\begin{equation}
    \alpha= 2/\bar{d}_{0n},
\end{equation}
where $\bar{d}_{0n}=\frac{1}{2}\sum_{n=1}^N \|\mathbf{x}_n\|_2^2/N$, $\mathbf{x}_n$ is the $n$-th data point, and $N$ is the total number of data points. The reason for such selection is explained in Section~\ref{sect3_alpha}.

\subsection{Artificial Datasets}
Fig.~\ref{fig:artificial_datasets} displays scatter plots of the four artificial datasets along with selected data clusterings. The average and the standard deviation of evaluation indexes over 50 trials are provided in Table~\ref{table::artificial_datasets}, where the best and the second-best results are highlighted in bold for easy comparison. On Data-A, the data points are sampled from Gaussian distributions. While on Data-B the data points are sampled from uniform distributions. Data-C and Data-D are a mixture of Gaussian and uniform distributions. The difference is that the majority of data points on Data-C are sampled from a Gaussian distribution while the majority of data points on Data-D are sampled from a uniform distribution. Additionally, Data-D has more classes than Data-C. All four artificial datasets are highly imbalanced with CV values greater than one.

We can see from Table~\ref{table::artificial_datasets} that the performance of benchmark algorithms is poor on these artificial datasets, especially on Data-C and Data-D. Nevertheless, the proposed EKM is effective, noticeably outperforming the best benchmark algorithm. The csiFKM has a certain effectiveness on Data-A and Data-B but does not exceed EKM. As seen in Fig.~\ref{fig:artificial_datasets}, the benchmark algorithms erroneously divide the majority class into multiple clusters to balance the data size (i.e., the uniform effect) while EKM does not. This experiment also proves that EKM is versatile in handling different data distributions.

\begin{figure*}[!t]
\centering
    \subfloat[]{\includegraphics[width=0.23\textwidth]{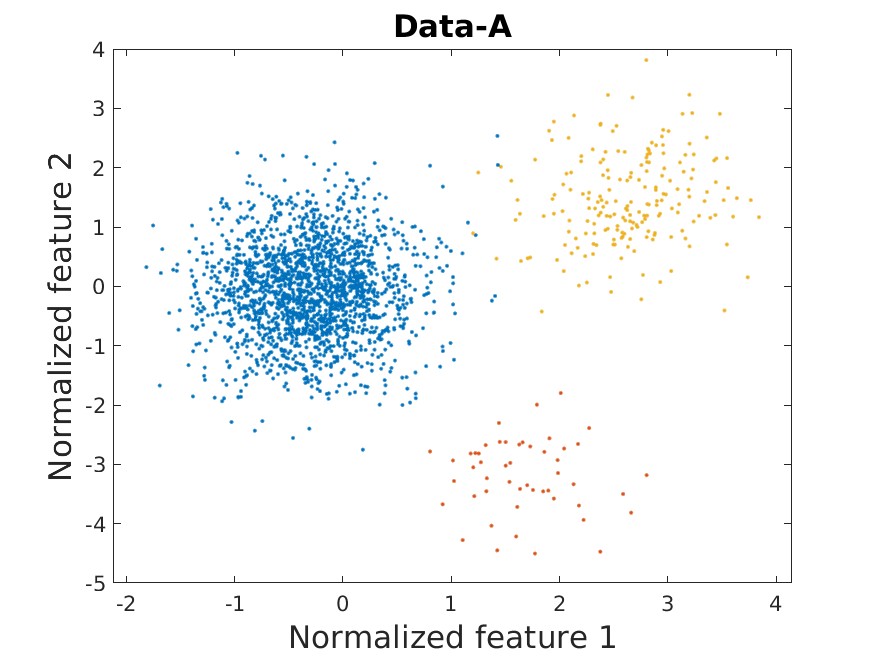}}
    \subfloat[]{\includegraphics[width=0.23\textwidth]{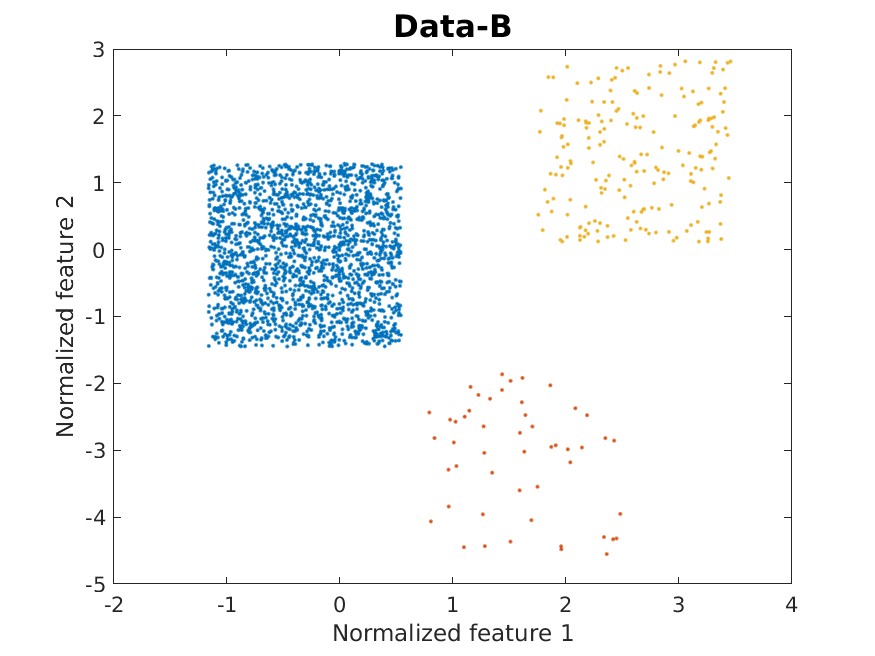}}
    \subfloat[]{\includegraphics[width=0.23\textwidth]{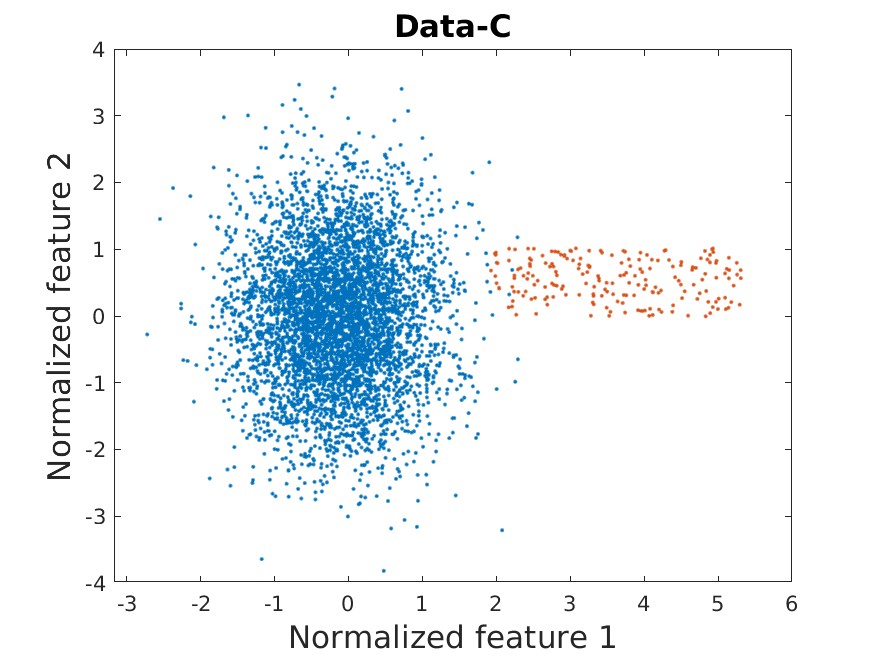}}
    \subfloat[]{\includegraphics[width=0.23\textwidth]{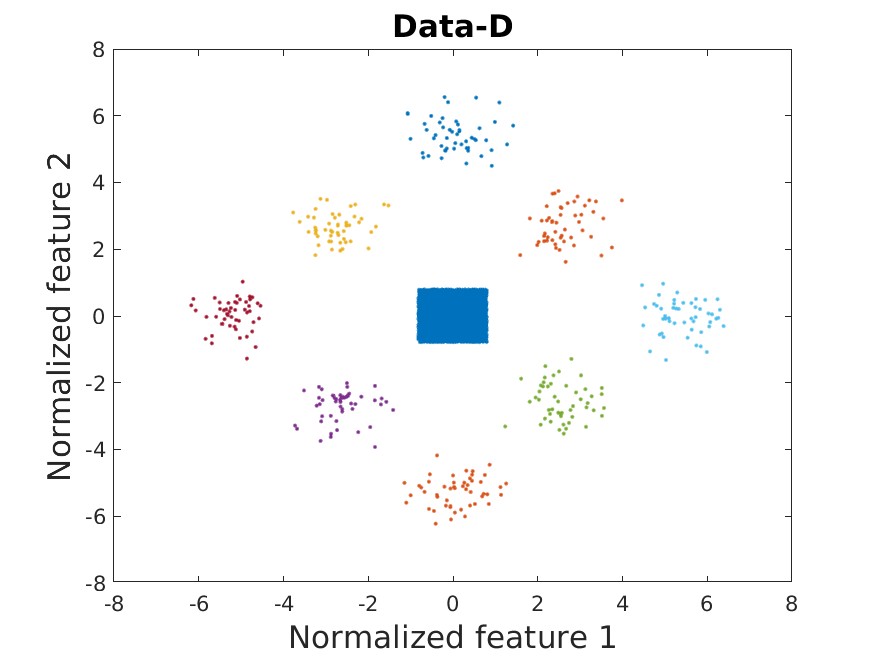}}\\
    \subfloat[]{\includegraphics[width=0.23\textwidth]{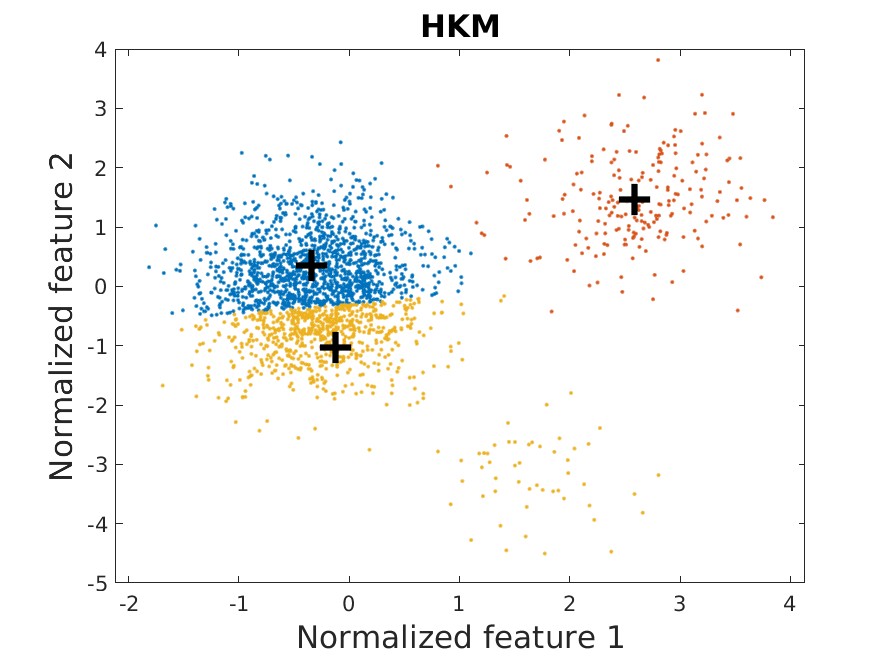}}
    \subfloat[]{\includegraphics[width=0.23\textwidth]{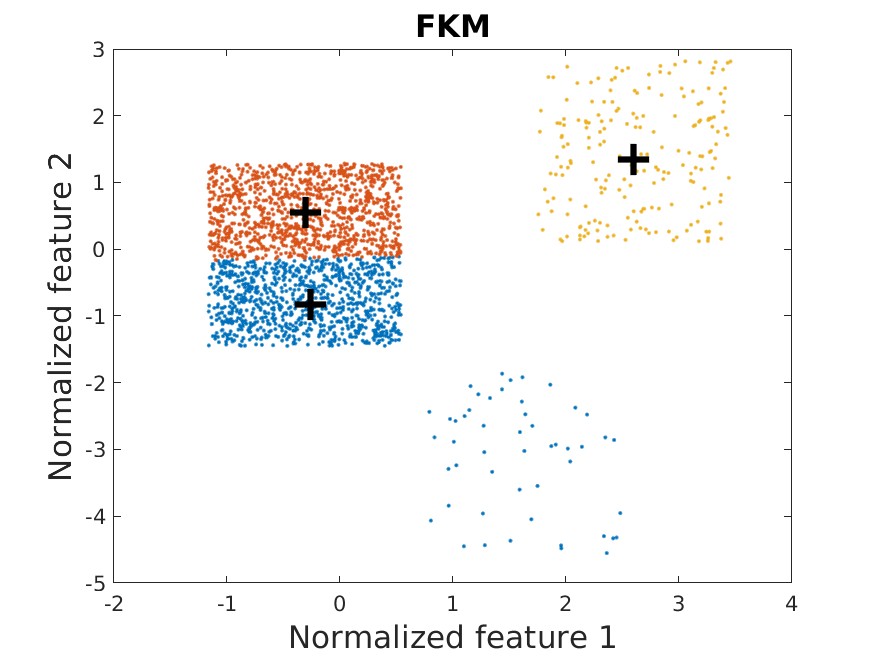}}
    \subfloat[]{\includegraphics[width=0.23\textwidth]{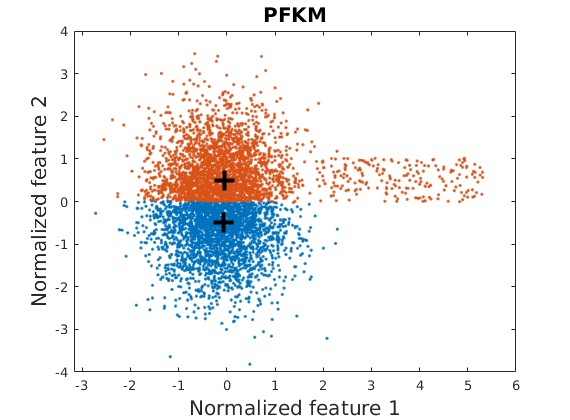}}
    \subfloat[]{\includegraphics[width=0.23\textwidth]{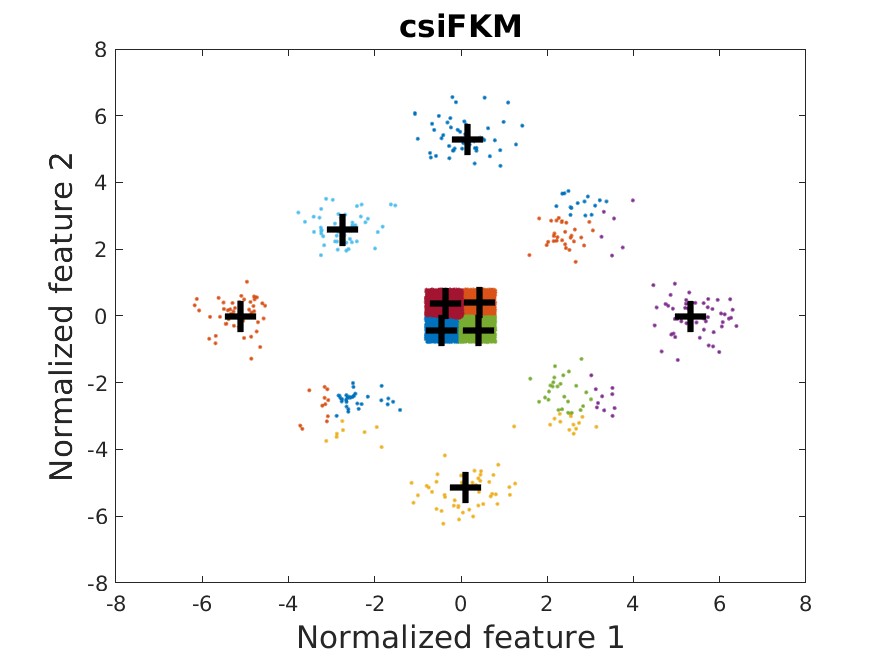}}\\
    \subfloat[]{\includegraphics[width=0.23\textwidth]{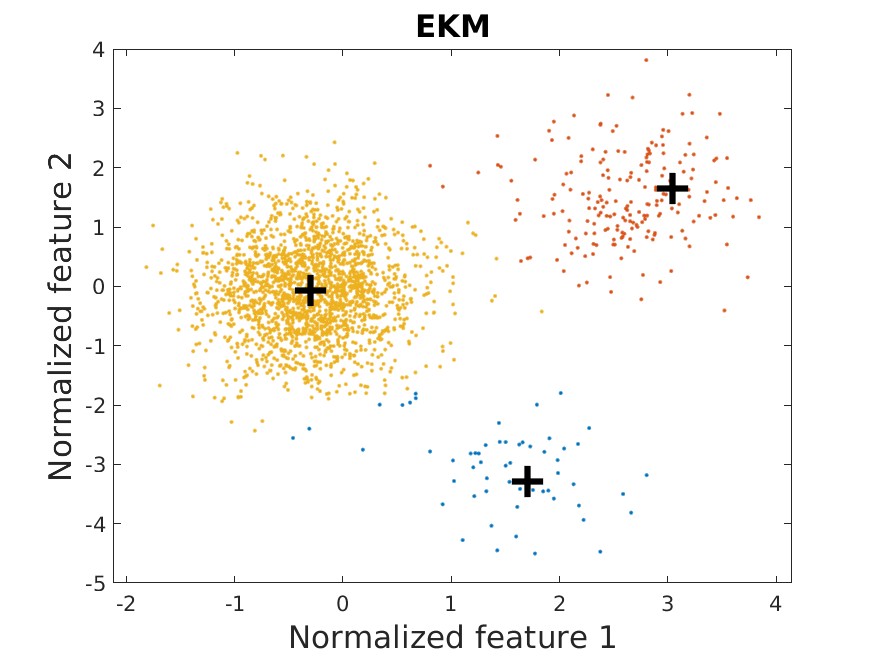}}
    \subfloat[]{\includegraphics[width=0.23\textwidth]{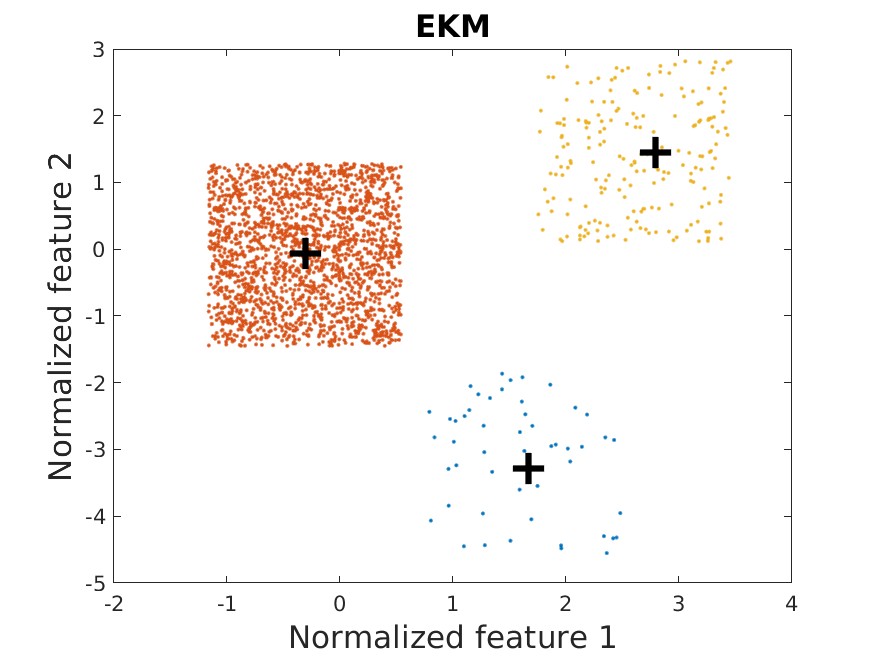}}
    \subfloat[]{\includegraphics[width=0.23\textwidth]{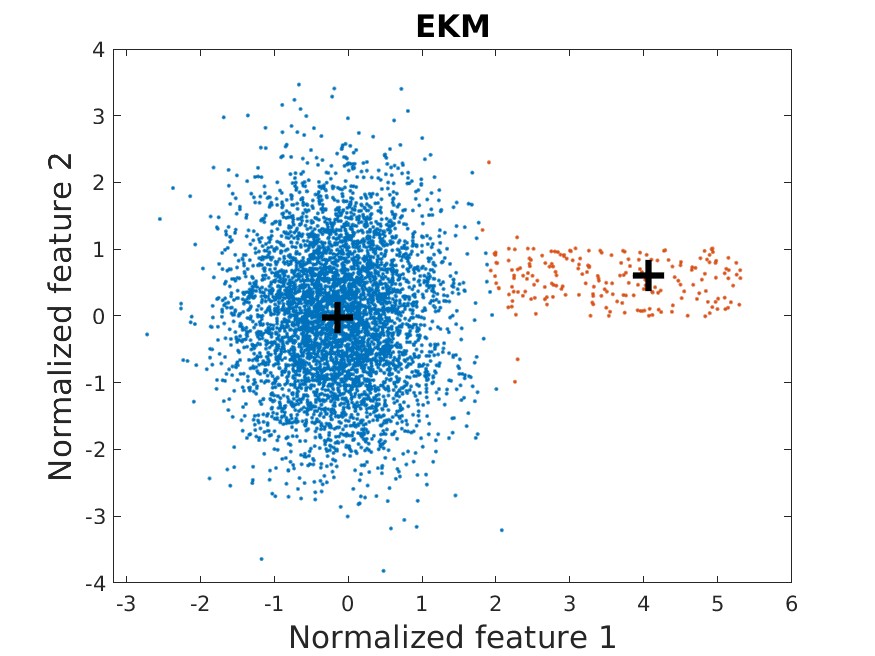}}
    \subfloat[]{\includegraphics[width=0.23\textwidth]{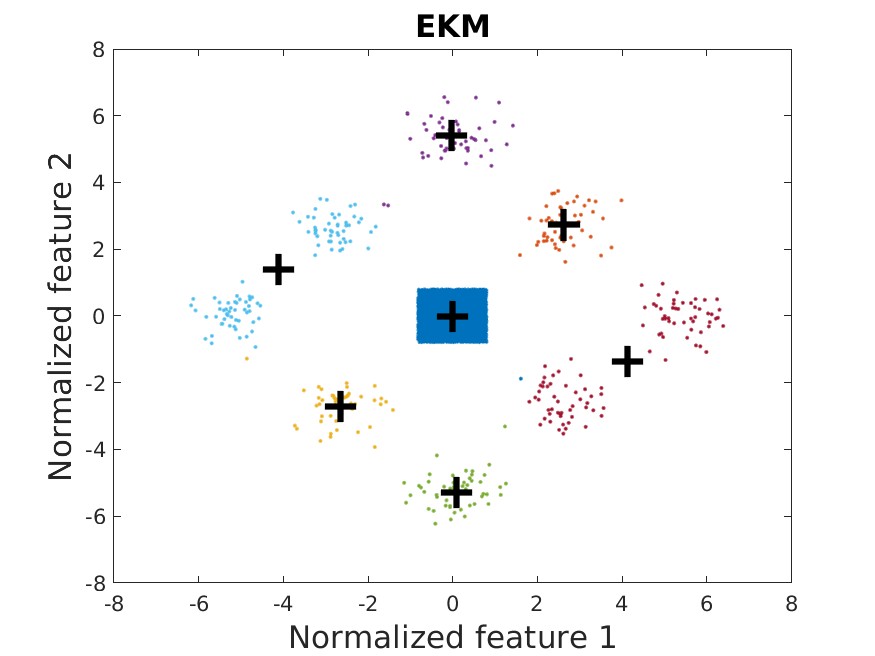}\label{fig:data-D_ekm}}

 \caption{Scatter diagrams of artificial datasets. (a)-(d) Primitive scatter diagrams with reference class labels (indicated by colors). (e)-(f) Clustering results of some benchmark algorithms. (i)-(l) Clustering results of the proposed EKM. The black crosses represent centroids obtained by each algorithm.}
\label{fig:artificial_datasets}
\end{figure*}

\begin{table}[!t]
\caption{
Experimental results on four artificial datasets with $CV>1$ (the best and second-best performances are in bold)
\label{table::artificial_datasets}}
\centering
{%
    \resizebox{0.48\textwidth}{!}{
        {\begin{tabular}{c|c|c|c|c|c|c|c|c|c} \hline
		Dataset&Measurement&HKM &FKM &MEFC  &PFKM &csiFKM  &siibFKM &FWPKM &EKM \\ \hline
            \multirow{3}{*}{Data-A}
            
                                  &NMI &\makecell{0.5150\\$\pm$0.0001} &\makecell{0.4982\\$\pm$0.0001} &\makecell{0.2270\\$\pm$0.0002}  &\makecell{0.4984\\$\pm$0.0001}
                                       &\makecell{\textbf{0.7579}\\$\pm$\textbf{0.0471}} &\makecell{0.4924\\$\pm$0.0503}  &\makecell{0.1427\\$\pm$0.0488}  &\makecell{\textbf{0.9126}\\$\pm$\textbf{0.0000}}\\ \cline{2-10}
                                  &ARI &\makecell{0.2906\\$\pm$0.0002} &\makecell{0.2485\\$\pm$0.0002} &\makecell{0.0756\\$\pm$0.0003}  &\makecell{0.2489\\$\pm$0.0003} 
                                       &\makecell{\textbf{0.8545}\\$\pm$\textbf{0.0329}} &\makecell{0.2974\\$\pm$0.0490}  &\makecell{0.0116\\$\pm$0.0357}  &\makecell{\textbf{0.9616}\\$\pm$\textbf{0.0000}}\\ \cline{2-10}
                                  &ACC &\makecell{0.6893\\$\pm$0.0003} &\makecell{0.6075\\$\pm$0.0006} &\makecell{0.4818\\$\pm$0.0003}  &\makecell{0.6089\\$\pm$0.0009}
                                       &\makecell{\textbf{0.9429}\\$\pm$\textbf{0.0164}} &\makecell{0.7443\\$\pm$0.0192} &\makecell{0.4562\\$\pm$0.0393}  &\makecell{\textbf{0.9933}\\$\pm$\textbf{0.0000}}\\  \cline{2-10}
                                       \hline
                                       
            \multirow{3}{*}{Data-B}
            
                                  &NMI &\makecell{0.5193\\$\pm$0.0000} &\makecell{0.5160\\$\pm$0.0000} &\makecell{0.2121\\$\pm$0.0001}  &\makecell{0.5181\\$\pm$0.0001}
                                       &\makecell{\textbf{0.7989}\\$\pm$\textbf{0.0677}} &\makecell{0.5090\\$\pm$0.0419}  &\makecell{0.1591\\$\pm$0.0577}  &\makecell{\textbf{0.9981}\\$\pm$\textbf{0.0057}}\\ \cline{2-10}
                                  &ARI &\makecell{0.2529\\$\pm$0.0000} &\makecell{0.2452\\$\pm$0.0000} &\makecell{0.0547\\$\pm$0.0000}  &\makecell{0.2498\\$\pm$0.0003}
                                       &\makecell{\textbf{0.8794}\\$\pm$\textbf{0.0406}} &\makecell{0.2461\\$\pm$0.0486}  &\makecell{0.0332\\$\pm$0.0497}  &\makecell{\textbf{0.9992}\\$\pm$\textbf{0.0023}}\\ \cline{2-10}
                                  &ACC &\makecell{0.6116\\$\pm$0.0000} &\makecell{0.5852\\$\pm$0.0002} &\makecell{0.4607\\$\pm$0.0016}  &\makecell{0.6022\\$\pm$0.0009}
                                       &\makecell{\textbf{0.9424}\\$\pm$\textbf{0.0194}} &\makecell{0.6304\\$\pm$0.0278}  &\makecell{0.4767\\$\pm$0.0671}  &\makecell{\textbf{0.9999}\\$\pm$\textbf{0.0004}}\\ \cline{2-10}
                                       \hline
                                  
          \multirow{3}{*}{Data-C}
          
                                  &NMI &\makecell{0.0909\\$\pm$0.0000} &\makecell{0.0797\\$\pm$0.0000} &\makecell{0.0892\\$\pm$0.0000}  &\makecell{0.0782\\$\pm$0.0001}
                                       &\makecell{\textbf{0.1069}\\$\pm$\textbf{0.0928}} &\makecell{0.0769\\$\pm$0.1281}  &\makecell{0.0783\\$\pm$0.0151}  &\makecell{\textbf{0.9514}\\$\pm$\textbf{0.0000}}\\ \cline{2-10}
                                      
		                       &ARI &\makecell{0.0181\\$\pm$0.0000} &\makecell{0.0034\\$\pm$0.0001} &\makecell{0.0159\\$\pm$0.0001}  &\makecell{0.0015\\$\pm$0.0001}
                                       &\makecell{\textbf{0.0728}\\$\pm$\textbf{0.1242}} &\makecell{0.0199\\$\pm$0.1713}  &\makecell{0.0053\\$\pm$0.0090}  &\makecell{\textbf{0.9807}\\$\pm$\textbf{0.0000}}\\   \cline{2-10}
                                       
                                  &ACC &\makecell{0.5754\\$\pm$0.0000} &\makecell{0.5302\\$\pm$0.0002} &\makecell{0.5689\\$\pm$0.0002}  &\makecell{0.5237\\$\pm$0.0002}
                                       &\makecell{0.6981\\$\pm$0.2108} &\makecell{\textbf{0.8207}\\$\pm$\textbf{0.0436}}  &\makecell{0.5371\\$\pm$0.0272}  &\makecell{\textbf{0.9987}\\$\pm$\textbf{0.0000}}\\  \hline
                                       
            \multirow{3}{*}{Data-D}
            
                                  &NMI &\makecell{\textbf{0.4799}\\$\pm$\textbf{0.0316}} &\makecell{0.3185\\$\pm$0.0010} &\makecell{0.3963\\$\pm$0.0670}  &\makecell{0.2353\\$\pm$0.0554}
                                       &\makecell{0.3984\\$\pm$0.0440} &\makecell{0.3022\\$\pm$0.0729}  &\makecell{0.1733\\$\pm$0.0268}  &\makecell{\textbf{0.9463}\\$\pm$\textbf{0.0104}}\\ \cline{2-10}
                                       
		                       &ARI &\makecell{\textbf{0.1096}\\$\pm$\textbf{0.0195}} &\makecell{0.0467\\$\pm$0.0007} &\makecell{0.1506\\$\pm$0.0360}  &\makecell{0.0389\\$\pm$0.0183}
                                       &\makecell{0.0813\\$\pm$0.0266} &\makecell{0.0339\\$\pm$0.0260}  &\makecell{0.0083\\$\pm$0.0068}  &\makecell{\textbf{0.9954}\\$\pm$\textbf{0.0049}}\\  \cline{2-10}

                                  &ACC &\makecell{0.3593\\$\pm$0.0555} &\makecell{0.2559\\$\pm$0.0080} &\makecell{\textbf{0.4893}\\$\pm$\textbf{0.0133}}  &\makecell{0.3699\\$\pm$0.0380}
                                       &\makecell{0.3357\\$\pm$0.0564 } &\makecell{0.2330\\$\pm$0.0398}  &\makecell{0.2206\\$\pm$0.0325}  &\makecell{\textbf{0.9763}\\$\pm$\textbf{0.0045}}\\ \hline

	\end{tabular}
    }
	}
 }
\end{table}

\subsection{Study of Parameter Impact}
\label{sect5_param_impact}
We investigate the influence of the parameter $\alpha$ on EKM in this experiment. We perform EKM with $\alpha$ values of $0.1,0.2,0.5,0.8,1,2, 5, 8, 10$ on the four artificial datasets. Fig.~\ref{fig:nmi_alpha} presents the corresponding NMI values for these $\alpha$ values. The NMI value of HKM is considered as a reference. It is evident that when $\alpha>5$, the NMI of EKM closely aligns with that of HKM. This similarity arises because a sufficiently large $\alpha$ makes the objective of EKM nearly identical to the objective of HKM. As the value of $\alpha$ decreases, the NMI of EKM increases abruptly, which implies the emergence of repulsive force between centroids overcoming the uniform effect. When $\alpha<0.2$, EKM becomes inferior to HKM. This is because the centroids of EKM tend to overlap when $\alpha$ is particularly small (repulsion becomes attraction, see Fig.~\ref{fig:membership_boltzmann} when $\alpha=0.5$). Fortunately, the range of $\alpha$ that causes the centroids to overlap is narrow.
\begin{figure}[!t]
\centering
\subfloat[]{\includegraphics[width=0.24\textwidth]{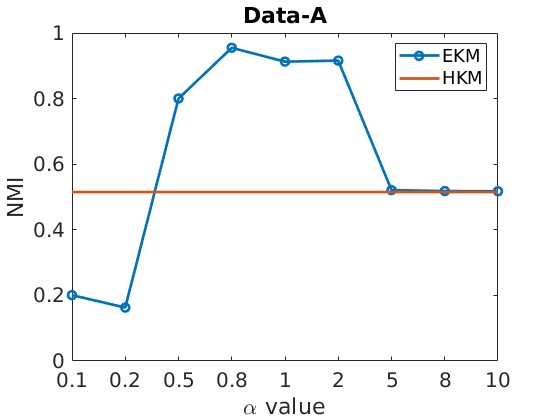}}
 \subfloat[]{\includegraphics[width=0.24\textwidth]{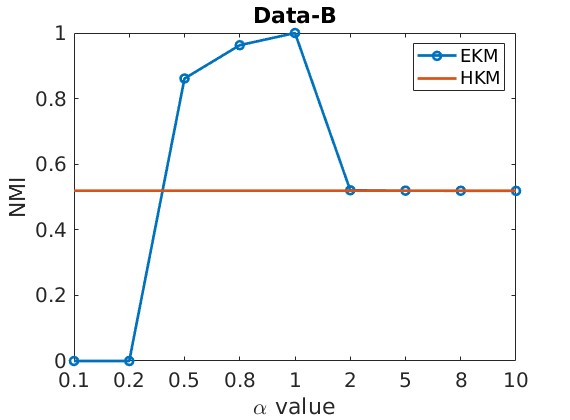}}
 \hfill
 \subfloat[]{\includegraphics[width=0.24\textwidth]{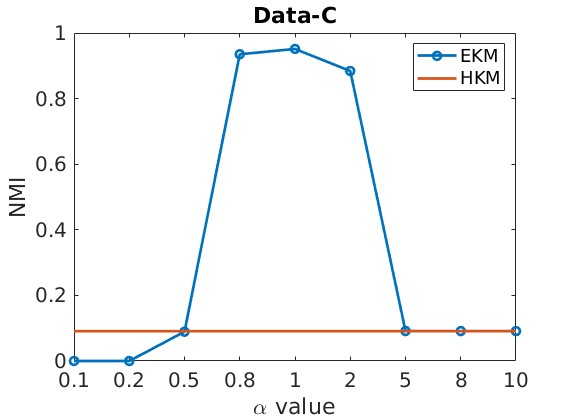}}
\subfloat[]{\includegraphics[width=0.24\textwidth]{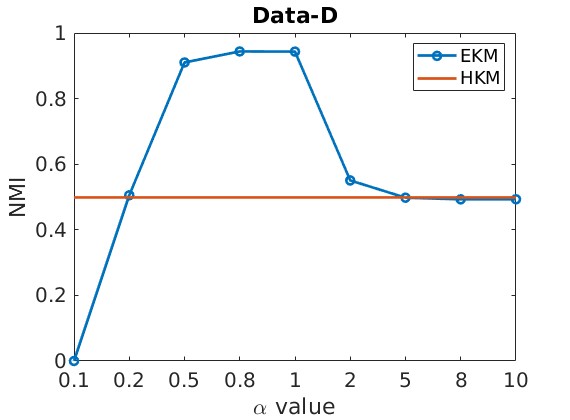}}
 \caption{NMI of EKM with different $\alpha$ values on the four artificial datasets. The blue curve is the NMI of EKM, and the red line is the NMI of HKM as a reference.}
\label{fig:nmi_alpha}
\end{figure}

\subsection{Real Datasets with Balanced Data}
 To evaluate EKM's effectiveness on balanced data, we perform it on six real datasets: IS, Seeds, Heart Disease, Wine, Rice, and WDBC. These datasets are routinely used in the field of machine learning and are characterized by uniform class sizes, CV values less than 0.4, and are categorized as balanced datasets as per our protocol. Table~\ref{table:balanced_performance} shows the clustering results. EKM delivers superior performance on the Wine dataset and is comparable to the baseline performance on the remaining five datasets. Although EKM's mechanism makes it more advantageous for imbalanced data, this experiment proves that EKM is competitive for balanced data.

\begin{table}[!t]
\caption{
Experimental results on selected six real-world datasets with $CV<0.4$ (The best performance is in bold)
\label{table:balanced_performance}}
\centering
{%
    \resizebox{0.48\textwidth}{!}{
        {\begin{tabular}{c|c|c|c|c|c|c|c|c|c} \hline
		Dataset&Measurement&HKM &FKM &MEFC  &PFKM&csiFKM &siibFKM &FWPKM&EKM  \\ \hline
            \multirow{3}{*}{IS}   
                                  &NMI &\makecell{0.5873\\$\pm$0.0008} &\makecell{0.5950\\$\pm$0.0057} &\makecell{0.6206\\$\pm$0.0009}  &\makecell{0.4962\\$\pm$0.0241}
                                       &\makecell{0.5195\\$\pm$0.0317} &\makecell{0.5925\\$\pm$0.0371}  &\makecell{0.5452\\$\pm$0.0664}  &\makecell{\textbf{0.6618}\\$\pm$\textbf{0.0138}}\\ \cline{2-10}
		                       &ARI &\makecell{0.4608\\$\pm$0.0005} &\makecell{0.4960\\$\pm$0.0059} &\makecell{\textbf{0.4979}\\$\pm$\textbf{0.0005}}  &\makecell{0.3650\\$\pm$0.0149}
                                       &\makecell{0.3472\\$\pm$0.0587} &\makecell{0.3776\\$\pm$0.0794}  &\makecell{0.4022\\$\pm$0.0884}  &\makecell{0.4810\\$\pm$0.0089}\\   \cline{2-10}
                                  &ACC &\makecell{0.5456\\$\pm$0.0003} &\makecell{\textbf{0.6578}\\$\pm$\textbf{0.0146}} &\makecell{0.5943\\$\pm$0.0002}  &\makecell{0.5276\\$\pm$0.0139}
                                       &\makecell{0.4977\\$\pm$0.0670} &\makecell{0.5840\\$\pm$0.0511}  &\makecell{0.5576\\$\pm$0.0775}  &\makecell{0.5609\\$\pm$0.0118}\\  
                                  \hline
            \multirow{3}{*}{Seeds}   
                                  &NMI &\makecell{0.7279\\$\pm$0.0000} &\makecell{0.7318\\$\pm$0.0056} &\makecell{\textbf{0.7496}\\$\pm$\textbf{0.0016}}  &\makecell{0.7170\\$\pm$0.0000}
                                       &\makecell{0.7200\\$\pm$0.0077} &\makecell{0.6893\\$\pm$0.0015}  &\makecell{0.5409\\$\pm$0.0336}  &\makecell{0.7315\\$\pm$0.0000} \\  \cline{2-10}
		                       &ARI &\makecell{0.7733\\$\pm$0.0000} &\makecell{0.7768\\$\pm$0.0058} &\makecell{\textbf{0.7968}\\$\pm$\textbf{0.0017}}  &\makecell{0.7607\\$\pm$0.0000}
                                       &\makecell{0.7640\\$\pm$0.0082} &\makecell{0.7143\\$\pm$0.0034}  &\makecell{0.4724\\$\pm$0.0513}  &\makecell{0.7715\\$\pm$0.0000} \\    \cline{2-10}
                                  &ACC &\makecell{0.9190\\$\pm$0.0000} &\makecell{0.9209\\$\pm$0.0023} &\makecell{\textbf{0.9285}\\$\pm$\textbf{0.0007}}  &\makecell{0.9143\\$\pm$0.0000}
                                       &\makecell{0.9156\\$\pm$0.0033} &\makecell{0.8952\\$\pm$0.0015}  &\makecell{0.7069\\$\pm$0.0649}  &\makecell{0.9190\\$\pm$0.0000} \\ 
                                  \hline
            \multirow{3}{*}{Heart Disease} 
                                  &NMI
                                       &\makecell{\textbf{0.3162}\\$\pm$\textbf{0.0000}} &\makecell{0.0142\\$\pm$0.0132} &\makecell{0.2500\\$\pm$0.0000}  &\makecell{0.2326\\$\pm$0.0975}
                                       &\makecell{0.1548\\$\pm$0.0398} &\makecell{0.0364\\$\pm$0.0476}  &\makecell{0.0868\\$\pm$0.0820}  &\makecell{0.2571\\$\pm$ 0.0000}\\ \cline{2-10}
		                       &ARI 
                                       &\makecell{\textbf{0.3641}\\$\pm$\textbf{0.0000}} &\makecell{0.0033\\$\pm$0.0042} &\makecell{0.3140\\$\pm$0.0000}  &\makecell{0.2995\\$\pm$0.1230}
                                       &\makecell{0.0509\\$\pm$0.0172} &\makecell{0.0163\\$\pm$0.0523}  &\makecell{0.1030\\$\pm$0.1050}  &\makecell{0.3162\\$\pm$ 0.0000}\\  \cline{2-10}
                                  &ACC 
                                       &\makecell{\textbf{0.8020}\\$\pm$\textbf{0.0000}} &\makecell{0.5249\\$\pm$0.0202} &\makecell{0.7805\\$\pm$0.0000}  &\makecell{0.7653\\$\pm$0.0689}
                                       &\makecell{0.6101\\$\pm$0.0284} &\makecell{0.5354\\$\pm$0.0549}  &\makecell{0.6344\\$\pm$0.0898}  &\makecell{0.7815\\$\pm$0.0000}\\ 
                                  \hline 
                                  
            \multirow{3}{*}{Wine} 
                                  &NMI &\makecell{0.8759\\$\pm$0.0000} &\makecell{0.8759\\$\pm$0.0000} &\makecell{0.8759\\$\pm$0.0000}  &\makecell{0.8097\\$\pm$0.0171}
                                       &\makecell{0.6966\\$\pm$0.1663} &\makecell{0.5250\\$\pm$0.0803}  &\makecell{0.3976\\$\pm$0.0794}  &\makecell{\textbf{0.8920}\\$\pm$\textbf{0.0000}}\\ \cline{2-10}
		                       &ARI &\makecell{0.8975\\$\pm$0.0000} &\makecell{0.8975\\$\pm$0.0000} &\makecell{0.8975\\$\pm$0.0000}  &\makecell{0.8249\\$\pm$0.0142}
                                       &\makecell{0.6283\\$\pm$0.2549} &\makecell{0.3954\\$\pm$0.1252}  &\makecell{0.3518\\$\pm$0.0987}  &\makecell{\textbf{0.9134}\\$\pm$\textbf{0.0000}}\\  \cline{2-10}
                                  &ACC &\makecell{0.9663\\$\pm$0.0000} &\makecell{0.9663\\$\pm$0.0000} &\makecell{0.9663\\$\pm$0.0000}  &\makecell{0.9396\\$\pm$0.0051}
                                       &\makecell{0.7797\\$\pm$0.1834} &\makecell{0.7129\\$\pm$0.0900}  &\makecell{0.6476\\$\pm$0.0742}  &\makecell{\textbf{0.9719}\\$\pm$\textbf{0.0000}}\\ 
                                  \hline
            \multirow{3}{*}{Rice}
            
                                  &NMI &\makecell{0.5685\\$\pm$0.0000} &\makecell{0.5688\\$\pm$0.0000} &\makecell{0.5682\\$\pm$0.0004}  &\makecell{\textbf{0.5742}\\$\pm$\textbf{0.0008}}
                                       &\makecell{0.5544\\$\pm$0.0001} &\makecell{0.5437\\$\pm$0.0032}  &\makecell{0.0536\\$\pm$0.0268}  &\makecell{0.5659\\$\pm$0.0005}\\ \cline{2-10}
		                       &ARI &\makecell{0.6815\\$\pm$0.0000} &\makecell{0.6824\\$\pm$0.0000} &\makecell{0.6817\\$\pm$0.0004}  &\makecell{\textbf{0.6876}\\$\pm$\textbf{0.0008}}
                                       &\makecell{0.6634\\$\pm$0.0001} &\makecell{0.6471\\$\pm$0.0109}  &\makecell{0.0338\\$\pm$0.0134}  &\makecell{0.6777\\$\pm$0.0004}\\ \cline{2-10}
                                  &ACC &\makecell{0.9129\\$\pm$0.0000} &\makecell{0.9131\\$\pm$0.0000} &\makecell{0.9129\\$\pm$0.0001}  &\makecell{\textbf{0.9147}\\$\pm$\textbf{0.0002}}
                                       &\makecell{0.9074\\$\pm$0.0000} &\makecell{0.9023\\$\pm$0.0033}  &\makecell{0.5941\\$\pm$0.0155}  &\makecell{0.9117\\$\pm$0.0001}\\ 
                                  \hline
                                  
            \multirow{3}{*}{WBDC} 
                                
                                  &NMI &\makecell{0.5547\\$\pm$0.0000} &\makecell{0.5612\\$\pm$0.0000} &\makecell{0.5547\\$\pm$0.0000}  &\makecell{\textbf{0.5700}\\$\pm$\textbf{0.0000}}
                                       &\makecell{0.3516\\$\pm$0.2432} &\makecell{0.4202\\$\pm$0.1246 }  &\makecell{0.1138\\$\pm$0.0626}  &\makecell{0.5513\\$\pm$0.0025}\\ \cline{2-10}
		                       &ARI &\makecell{0.6707\\$\pm$0.0000} &\makecell{0.6829\\$\pm$0.0000} &\makecell{0.6707\\$\pm$0.0000}  &\makecell{\textbf{0.6895}\\$\pm$\textbf{0.0000}}
                                       &\makecell{0.4093\\$\pm$0.3070} &\makecell{0.5250\\$\pm$0.1734}  &\makecell{0.0281\\$\pm$0.0409}  &\makecell{0.6444\\$\pm$0.0028}\\   \cline{2-10}
                                  &ACC &\makecell{0.9104\\$\pm$0.0000} &\makecell{0.9139\\$\pm$0.0000} &\makecell{0.9104\\$\pm$0.0000}  &\makecell{\textbf{0.9156}\\$\pm$\textbf{0.0000}}
                                       &\makecell{0.7989\\$\pm$0.1380} &\makecell{0.8557\\$\pm$0.0810}  &\makecell{0.5910\\$\pm$0.0441}  &\makecell{0.9027\\$\pm$0.0009}\\ 
                                  \hline
            
	\end{tabular}
    }
	}
 }
\end{table}
\subsection{Real Datasets with Imbalanced Data}
The considered datasets here are Ecoli, Htru2, Zoo, Glass, Shill Bidding, Anuran Calls, Occupancy Detection, Machine Failure, Pulsar Cleaned, and Bert-Embedded Spam. Among them, Htru2 and Occupancy Detection have a large amount of data, Glass and Ecoli have 6 and 8 classes respectively, and Zoo has high dimensionality. They have varying class sizes with CV values greater than 0.7. Except for Occupancy Detection and Zoo, the other eight datasets have CV values greater than 1. Bert-Embedded Spam has over 700 features. Since the Euclidean distance is ineffective for clustering algorithms on high-dimensional data, we execute principal component analysis on the Bert-Embedded Spam data. The first five principal components are used for clustering as adding more does not improve and even compromise performance. Table~\ref{table:imbalanced_performance} displays the results, where the best and the second-best results are in bold.

As can be seen from Table~\ref{table:imbalanced_performance}, in general, EKM obtains the best performances on eight datasets, and achieves the second-best clustering performances on the remaining two datasets (Zoo and Anuran Calls), as confirmed by all three measured clustering indexes. Specifically, although csiFKM outperforms EKM on the Zoo dataset, it performs substantially worse on Htru2, Occupancy Detection, Shill Bidding, etc. HKM performs well on Anuran Calls but underperforms on Shill Bidding, Machine Failure, and Bert-Embedded Spam. Due to the structural complexity of real datasets, it is reasonable that some algorithms perform well on certain datasets. In comparison, EKM performs consistently well on all datasets, especially on the Htru2, Shill Bidding, Machine Failure, and Bert-Embedded Spam datasets, where it significantly outperforms the benchmark algorithms. Fig.~\ref{fig:imbalanced_datasets} illustrates the scatter diagrams of Htru2, Shill Bidding, Machine Failure, and Bert-Embedded Spam datasets, along with some clustering results. Comparing Fig.~\subref*{fig:htru2_fwpkm} - \subref*{fig:spam_mefc} with Fig.~\subref*{fig:htru2_ekm} - \subref*{fig:spam_ekm}, again, we can see the uniform effect exists in the benchmark algorithms. Conversely, the centroids obtained by EKM are farther away from each other, implying the existence of the repulsive force overcoming the uniform effect. Note that, we can only see one centroid in some figures because centroids overlap on the selected two coordinates. Overall, the experiment results provide strong evidence to support the superiority and consistent performance of EKM on imbalanced data. Note that, we simply set $\alpha$ proportional to the data variance. EKM's performance can be further enhanced by fine-tuning the parameter $\alpha$.

\begin{table*}[!t]
\caption{
Experimental results on selected ten real-world datasets with $CV>0.7$ (the best and second-best performances are in bold)
\label{table:imbalanced_performance}}
\centering
{%
    \resizebox{0.95\textwidth}{!}{
        {\begin{tabular}{c|c|c|c|c|c|c|c|c|c} \hline
		Dataset&Measurement&HKM &FKM &MEFC  &PFKM&csiFKM &siibFKM &FWPKM&EKM   \\ \hline
            \multirow{3}{*}{Zoo} 
            
                                  &NMI &\makecell{\textbf{0.8381}\\$\pm$\textbf{0.0268}} &\makecell{0.7764\\$\pm$0.0019} &\makecell{0.8179\\$\pm$0.0000}  &\makecell{0.6611\\$\pm$0.0806}
                                       &\makecell{\textbf{0.8657}\\$\pm$\textbf{0.0054}} &\makecell{0.7625\\$\pm$0.0415}  &\makecell{0.6652\\$\pm$0.0804}  &\makecell{0.7912\\$\pm$0.0188}\\ \cline{2-10}
		                       &ARI &\makecell{0.7546\\$\pm$0.0487} &\makecell{0.6235\\$\pm$0.0010} &\makecell{0.6444 \\$\pm$0.0000}  &\makecell{0.4868\\$\pm$0.0829}
                                       &\makecell{\textbf{0.8715}\\$\pm$\textbf{0.0034}} &\makecell{0.6423\\$\pm$0.1030}  &\makecell{0.4743\\$\pm$0.0948}  &\makecell{\textbf{0.8181}\\$\pm$\textbf{0.0783}}\\ 
                                       \cline{2-10}
                                  &ACC &\makecell{0.8248\\$\pm$0.0310 } &\makecell{0.6832\\$\pm$0.0000} &\makecell{0.7228\\$\pm$0.0000}  &\makecell{0.6267\\$\pm$0.0536}
                                       &\makecell{\textbf{0.8703}\\$\pm$\textbf{0.0078}} &\makecell{0.7305\\$\pm$0.0831}  &\makecell{0.5958\\$\pm$0.0604}  &\makecell{\textbf{0.8507}\\$\pm$\textbf{0.0566}}\\ 
                                  \hline

            \multirow{3}{*}{Glass} 
            
                                  &NMI &\makecell{0.3140\\$\pm$0.0046} &\makecell{0.3073\\$\pm$0.0003} &\makecell{0.3120\\$\pm$0.0012}  &\makecell{0.0721\\$\pm$0.0069}
                                       &\makecell{0.3390\\$\pm$0.0549} &\makecell{\textbf{0.3424}\\$\pm$\textbf{0.0413}}  &\makecell{0.1654\\$\pm$0.0823}  &\makecell{\textbf{0.3764}\\$\pm$\textbf{0.0370}}\\ 
                                       \cline{2-10}
		                       &ARI &\makecell{0.1702\\$\pm$0.0026} &\makecell{0.1529\\$\pm$0.0005} &\makecell{0.1651\\$\pm$0.0009}  &\makecell{0.0070\\$\pm$0.0059}
                                       &\makecell{\textbf{0.1930}\\$\pm$\textbf{0.0435}} &\makecell{0.1648\\$\pm$0.0366}  &\makecell{0.0734\\$\pm$0.0659}  &\makecell{\textbf{0.1978}\\$\pm$\textbf{0.0193}}\\   
                                       \cline{2-10}
                                  &ACC &\makecell{0.4586\\$\pm$0.0076} &\makecell{0.4021\\$\pm$0.0011} &\makecell{0.4487\\$\pm$0.0007}  &\makecell{0.3368\\$\pm$0.0054}
                                       &\makecell{\textbf{0.4684}\\$\pm$\textbf{0.0476}} &\makecell{0.4202\\$\pm$0.0270}  &\makecell{0.4096\\$\pm$0.0584}  &\makecell{\textbf{0.4796}\\$\pm$\textbf{0.0086}}\\ 
                                  \hline

            \multirow{3}{*}{Ecoli} 
            
                                  &NMI &\makecell{\textbf{0.6379}\\$\pm$\textbf{0.0040}} &\makecell{0.5695\\$\pm$0.0115} &\makecell{0.6096\\$\pm$0.0076}  &\makecell{0.5012\\$\pm$0.0649}
                                       &\makecell{0.5985\\$\pm$0.0241} &\makecell{0.5541\\$\pm$0.0464}  &\makecell{0.4746\\$\pm$0.0424}  &\makecell{\textbf{0.6426}\\$\pm$\textbf{0.0024}}\\ 
                                       \cline{2-10}
		                       &ARI &\makecell{\textbf{0.5032}\\$\pm$\textbf{0.0070}} &\makecell{0.4142\\$\pm$0.0204} &\makecell{0.4815\\$\pm$0.0270}  &\makecell{0.3357\\$\pm$0.0752}
                                       &\makecell{0.4807\\$\pm$0.0898} &\makecell{0.3697\\$\pm$0.0862}  &\makecell{0.2832\\$\pm$0.0500}  &\makecell{\textbf{0.5157}\\$\pm$\textbf{0.0013}}\\   
                                       \cline{2-10}
                                  &ACC &\makecell{\textbf{0.6461}\\$\pm$\textbf{0.0103}} &\makecell{0.5769\\$\pm$0.0160} &\makecell{0.6238\\$\pm$0.0218}  &\makecell{0.5302\\$\pm$0.0435}
                                       &\makecell{0.6276\\$\pm$0.0711} &\makecell{0.5624\\$\pm$0.0638}  &\makecell{0.4598\\$\pm$0.0410}  &\makecell{\textbf{0.6482}\\$\pm$\textbf{0.0043}}\\ 
                                  \hline

            \multirow{3}{*}{Htru2} 
            
                                  &NMI &\makecell{0.4068\\$\pm$0.0000} &\makecell{\textbf{0.4100}\\$\pm$\textbf{0.0004}} &\makecell{0.4075\\$\pm$0.0002}  &\makecell{0.1289\\$\pm$0.0002}
                                       &\makecell{0.0204\\$\pm$0.0506} &\makecell{0.3671\\$\pm$0.0290}  &\makecell{0.0660\\$\pm$0.0268}  &\makecell{\textbf{0.5872}\\$\pm$\textbf{0.0003}}\\ 
                                       \cline{2-10}
		                       &ARI &\makecell{0.6071\\$\pm$0.0000} &\makecell{0.5796\\$\pm$0.0005} &\makecell{\textbf{0.6075}\\$\pm$\textbf{0.0002}}  &\makecell{0.0462\\$\pm$0.0002}
                                       &\makecell{0.0077\\$\pm$0.0649} &\makecell{0.5704\\$\pm$0.0379}  &\makecell{-0.0117\\$\pm$0.0074}  &\makecell{\textbf{0.7333}\\$\pm$\textbf{0.0002}}\\ 
                                       \cline{2-10}
                                  &ACC &\makecell{\textbf{0.9366}\\$\pm$\textbf{0.0000}} &\makecell{0.9252\\$\pm$0.0001} &\makecell{\textbf{0.9366}\\$\pm$\textbf{0.0000}}  &\makecell{0.6091\\$\pm$0.0002}
                                       &\makecell{0.8888\\$\pm$0.0637} &\makecell{0.9337\\$\pm$0.0126}  &\makecell{0.5170\\$\pm$0.0151}  &\makecell{\textbf{0.9661}\\$\pm$\textbf{0.0000}}\\ 
                                  \hline

            \multirow{3}{*}{Shill Bidding} 
                                  &NMI &\makecell{0.0021\\$\pm$0.0001} &\makecell{0.0023\\$\pm$0.0000} &\makecell{0.0042\\$\pm$0.0002}  &\makecell{0.0012\\$\pm$0.0000}
                                       &\makecell{0.0295\\$\pm$0.0359} &\makecell{\textbf{0.0351}\\$\pm$\textbf{0.0047}}  &\makecell{0.0105\\$\pm$0.0330}  &\makecell{\textbf{0.6216}\\$\pm$\textbf{0.0000}}\\ 
                                       \cline{2-10}
		                       &ARI &\makecell{-0.0006\\$\pm$0.0000} &\makecell{-0.0002\\$\pm$0.0000} &\makecell{-0.0002\\$\pm$0.0000}  &\makecell{-0.0002\\$\pm$0.0000}
                                       &\makecell{-0.0485\\$\pm$0.0482} &\makecell{-0.0854\\$\pm$0.0118}  &\makecell{\textbf{0.0079}\\$\pm$\textbf{0.0252}}  &\makecell{\textbf{0.7914}\\$\pm$\textbf{0.0000}}\\ 
                                       \cline{2-10}
                                  &ACC &\makecell{0.5020\\$\pm$0.0002} &\makecell{0.5059\\$\pm$0.0000} &\makecell{0.5090\\$\pm$0.0003}  &\makecell{0.5035\\$\pm$0.0002}
                                       &\makecell{0.7457\\$\pm$0.0961} &\makecell{\textbf{0.7869}\\$\pm$\textbf{0.0182}}  &\makecell{0.5937\\$\pm$0.0750}  &\makecell{\textbf{0.9674}\\$\pm$\textbf{0.0000}}\\ 
                                \hline

           \multirow{3}{*}{Anuran Calls}                                  
                                  &NMI &\makecell{\textbf{0.6775}\\$\pm$\textbf{0.0195}} &\makecell{0.5699\\$\pm$0.0002} &\makecell{0.6155\\$\pm$0.0022}  &\makecell{0.4270\\$\pm$0.0153}
                                       &\makecell{0.5575\\$\pm$0.0516} &\makecell{0.5178\\$\pm$0.0339}  &\makecell{0.5159\\$\pm$0.0247}  &\makecell{\textbf{0.6229}\\$\pm$\textbf{0.0134}}\\ 
                                       \cline{2-10}
		                       &ARI &\makecell{\textbf{0.5826}\\$\pm$\textbf{0.0174}} &\makecell{0.3414\\$\pm$0.0009} &\makecell{0.4017\\$\pm$0.0027}  &\makecell{0.2254\\$\pm$0.0694}
                                       &\makecell{0.4329\\$\pm$0.1363} &\makecell{0.2368\\$\pm$0.0694}  &\makecell{0.4268\\$\pm$0.0500}  &\makecell{\textbf{0.5145}\\$\pm$\textbf{0.0329}}\\ 
                                       \cline{2-10}
                                  &ACC &\makecell{\textbf{0.6441}\\$\pm$\textbf{0.0134}} &\makecell{0.4363\\$\pm$0.0009} &\makecell{0.5103\\$\pm$0.0073}  &\makecell{0.3705\\$\pm$0.0230}
                                       &\makecell{0.5682\\$\pm$0.0799} &\makecell{0.4101\\$\pm$0.0348}  &\makecell{0.5123\\$\pm$0.0271}  &\makecell{\textbf{0.5821}\\$\pm$\textbf{0.0420}}\\  
                                       \hline

            \multirow{3}{*}{Occupancy Detection}                                   
                                  &NMI &\makecell{\textbf{0.4890}\\$\pm$\textbf{0.0001}} &\makecell{0.4720\\$\pm$0.0003} &\makecell{0.4710\\$\pm$0.0002}  &\makecell{0.0539\\$\pm$0.0007}
                                       &\makecell{0.0934\\$\pm$0.1560} &\makecell{0.2982\\$\pm$0.1379}  &\makecell{0.0591\\$\pm$0.0464}  &\makecell{\textbf{0.5389}\\$\pm$\textbf{0.0002}}\\ 
                                       \cline{2-10}
		                       &ARI &\makecell{\textbf{0.5959}\\$\pm$\textbf{0.0001}} &\makecell{0.5699\\$\pm$0.0004} &\makecell{0.5670\\$\pm$0.0002}  &\makecell{0.0209\\$\pm$0.0002}
                                       &\makecell{0.0521\\$\pm$0.2248} &\makecell{0.2175\\$\pm$0.2226}  &\makecell{0.0363\\$\pm$0.0682}  &\makecell{\textbf{0.6750}\\$\pm$\textbf{0.0002}}\\ 
                                       \cline{2-10}
                                  &ACC &\makecell{\textbf{0.8912}\\$\pm$\textbf{0.0000}} &\makecell{0.8825\\$\pm$0.0001} &\makecell{0.8815\\$\pm$0.0001}  &\makecell{0.5772\\$\pm$0.0004}
                                       &\makecell{0.7597\\$\pm$0.0596} &\makecell{0.7026\\$\pm$0.1361}  &\makecell{0.6117\\$\pm$0.0595}  &\makecell{\textbf{0.9163}\\$\pm$\textbf{0.0001}}\\ 
                                       \hline

            \multirow{3}{*}{Machine Failure} 
            
                                  &NMI &\makecell{0.0248\\$\pm$0.0000} &\makecell{0.0281\\$\pm$0.0000} &\makecell{0.0310\\$\pm$0.0000}  &\makecell{0.0284\\$\pm$0.0003}
                                       &\makecell{\textbf{0.2285}\\$\pm$\textbf{0.0569}} &\makecell{0.0540\\$\pm$0.0292}  &\makecell{0.0137\\$\pm$0.0161}  &\makecell{\textbf{0.5825}\\$\pm$\textbf{0.1869}}\\ 
                                       \cline{2-10}
		                       &ARI &\makecell{-0.0116\\$\pm$0.0000} &\makecell{-0.0083\\$\pm$0.0000} &\makecell{-0.0052\\$\pm$0.0000}  &\makecell{0.0007\\$\pm$0.0001}
                                       &\makecell{\textbf{0.2217}\\$\pm$\textbf{0.0510}} &\makecell{-0.0037\\$\pm$0.0382}  &\makecell{0.0031\\$\pm$0.0083}  &\makecell{\textbf{0.6506}\\$\pm$\textbf{0.1987}}\\ 
                                       \cline{2-10}
                                  &ACC &\makecell{0.5673\\$\pm$0.0000} &\makecell{0.5418\\$\pm$0.0002} &\makecell{0.5202\\$\pm$0.0001}  &\makecell{0.5146\\$\pm$0.0007}
                                       &\makecell{\textbf{0.9653}\\$\pm$\textbf{0.0543}} &\makecell{0.6125\\$\pm$0.0412}  &\makecell{0.5481\\$\pm$0.0358}  &\makecell{\textbf{0.9863}\\$\pm$\textbf{0.0068}}\\ 
                                \hline

            \multirow{3}{*}{Pulsar Cleaned} 
                                  &NMI &\makecell{0.0311\\$\pm$0.0002} &\makecell{0.0167\\$\pm$0.0000} &\makecell{0.0201\\$\pm$0.0000}  &\makecell{0.0236 \\$\pm$0.0011}
                                       &\makecell{\textbf{0.0523}\\$\pm$\textbf{0.0813}} &\makecell{0.0044\\$\pm$0.0081}  &\makecell{0.0090\\$\pm$0.0067}  &\makecell{\textbf{0.0544}\\$\pm$\textbf{0.0001}}\\ 
                                       \cline{2-10}
		                       &ARI &\makecell{0.0370\\$\pm$0.0003} &\makecell{0.0069\\$\pm$0.0000} &\makecell{0.0131\\$\pm$0.0001}  &\makecell{-0.0011\\$\pm$0.0002}
                                       &\makecell{\textbf{0.0731}\\$\pm$\textbf{0.1193}} &\makecell{-0.0067\\$\pm$0.0160}  &\makecell{-0.0039\\$\pm$0.0080}  &\makecell{\textbf{0.1082}\\$\pm$\textbf{0.0002}}\\ 
                                       \cline{2-10}
                                  &ACC &\makecell{0.7329\\$\pm$0.0008} &\makecell{0.5714\\$\pm$0.0001} &\makecell{0.6171\\$\pm$0.0004}  &\makecell{0.5019\\$\pm$0.0012}
                                       &\makecell{\textbf{0.9488}\\$\pm$\textbf{0.0973}} &\makecell{\textbf{0.9043}\\$\pm$\textbf{0.1231}}  &\makecell{0.6036\\$\pm$0.0659}  &\makecell{0.8834\\$\pm$0.0002}\\ 
                                       \hline

            \multirow{3}{*}{Bert-Embedded Spam}
                                  &NMI &\makecell{\textbf{0.1517}\\$\pm$\textbf{0.0001}} &\makecell{0.1249\\$\pm$0.0000} &\makecell{0.0472\\$\pm$0.0374}  &\makecell{0.1122\\$\pm$0.0023}
                                       &\makecell{0.0330\\$\pm$0.0791} &\makecell{0.0431\\$\pm$0.0946}  &\makecell{0.0238\\$\pm$0.0004}  &\makecell{\textbf{0.6704}\\$\pm$\textbf{0.0000}}\\ 
                                       \cline{2-10}
		                       &ARI &\makecell{\textbf{0.0830}\\$\pm$\textbf{0.0003}} &\makecell{0.0550\\$\pm$0.0000} &\makecell{0.0289\\$\pm$0.0265}  &\makecell{-0.0026\\$\pm$0.0007}
                                       &\makecell{0.0080\\$\pm$0.1084} &\makecell{0.0296\\$\pm$0.1287}  &\makecell{-0.0669\\$\pm$0.0004}  &\makecell{\textbf{0.8216}\\$\pm$\textbf{0.0000}}\\ 
                                       \cline{2-10}
                                  &ACC &\makecell{0.6453\\$\pm$0.0002} &\makecell{0.6174\\$\pm$0.0000} &\makecell{0.5837\\$\pm$0.0390}  &\makecell{0.5482\\$\pm$0.0012}
                                       &\makecell{0.8210\\$\pm$0.0816} &\makecell{\textbf{0.8281}\\$\pm$\textbf{0.0832}}  &\makecell{0.8038\\$\pm$0.0002}  &\makecell{\textbf{0.9664}\\$\pm$\textbf{0.0000}}\\ 
                                       \hline
            
            \hline
            
	\end{tabular}
    }
	}
 }
\end{table*}

\begin{figure*}[!t]
\centering
    \subfloat[]{\includegraphics[width=0.23\textwidth]{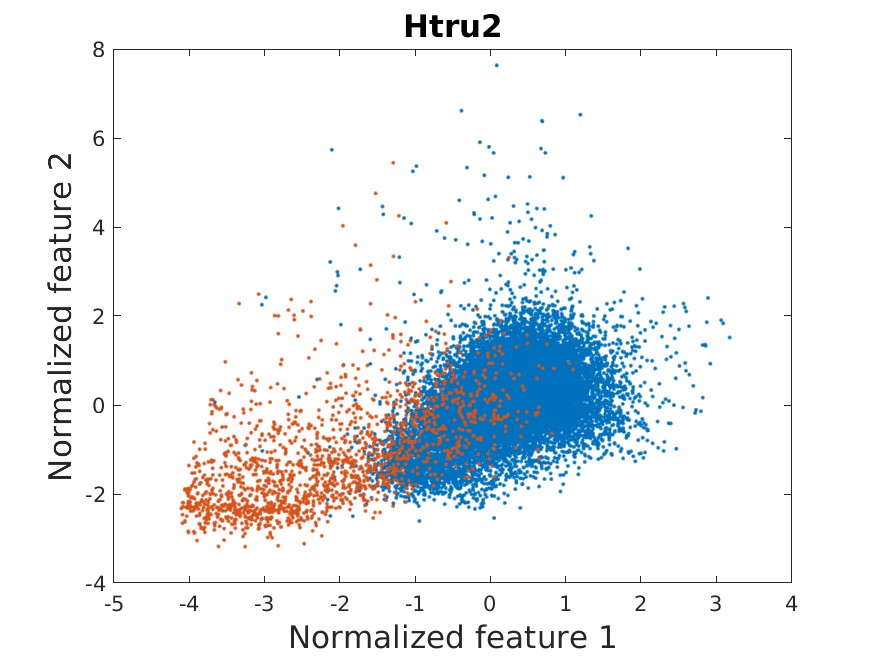}}
    \subfloat[]{\includegraphics[width=0.23\textwidth]{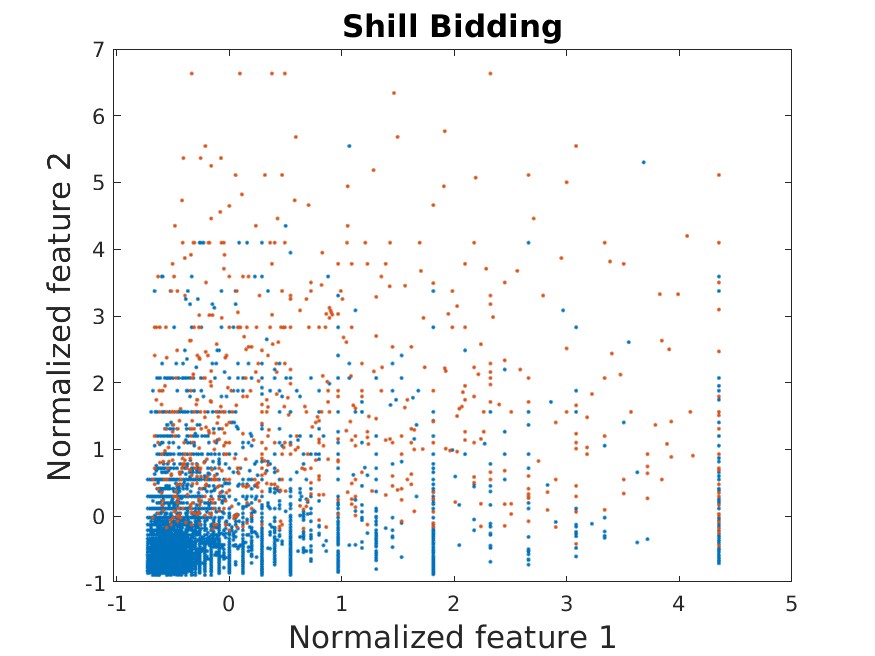}}
    \subfloat[]{\includegraphics[width=0.23\textwidth]{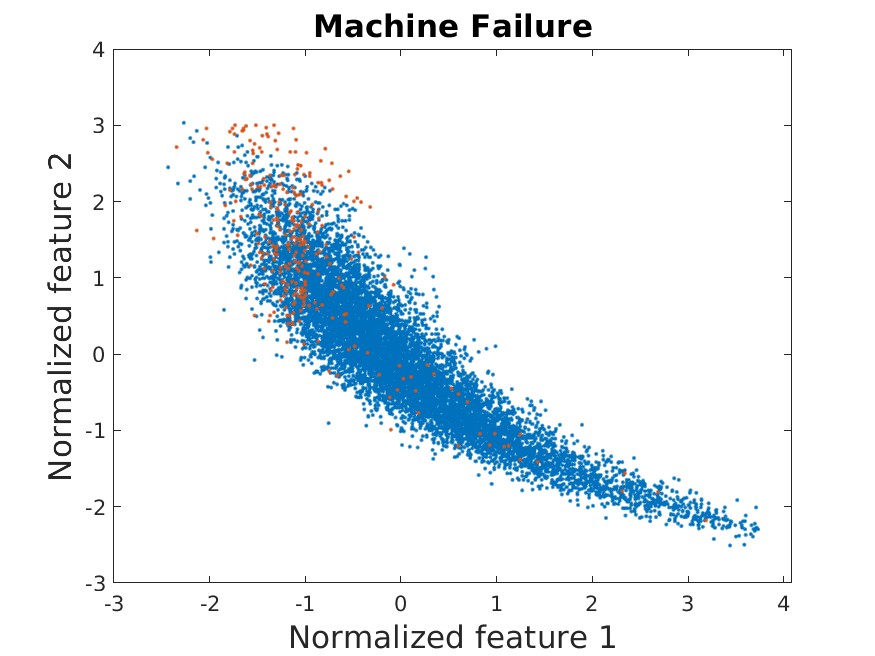}}
    \subfloat[]{\includegraphics[width=0.23\textwidth]{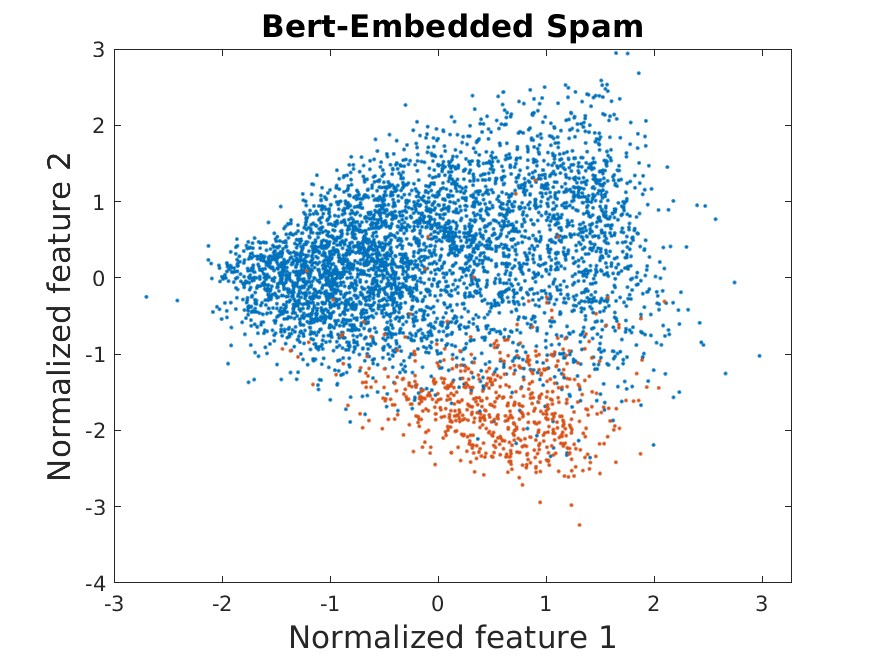}}\\
    \subfloat[]{\includegraphics[width=0.23\textwidth]{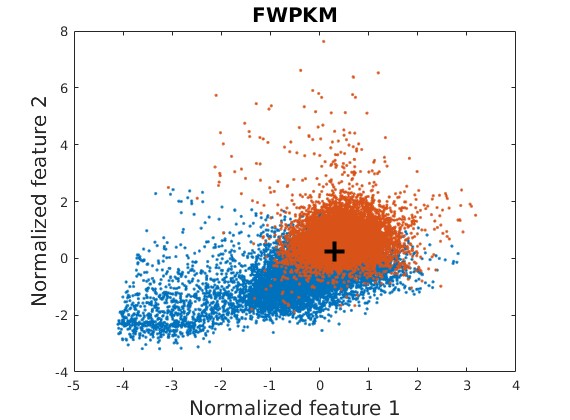}\label{fig:htru2_fwpkm}}
    \subfloat[]{\includegraphics[width=0.23\textwidth]{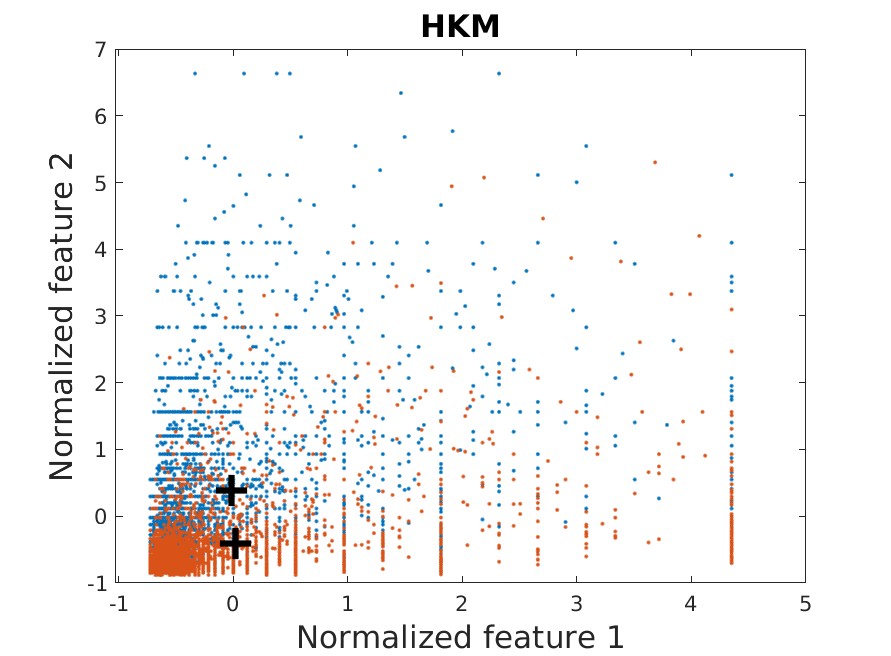}\label{fig:shill_hkm}}
    \subfloat[]{\includegraphics[width=0.23\textwidth]{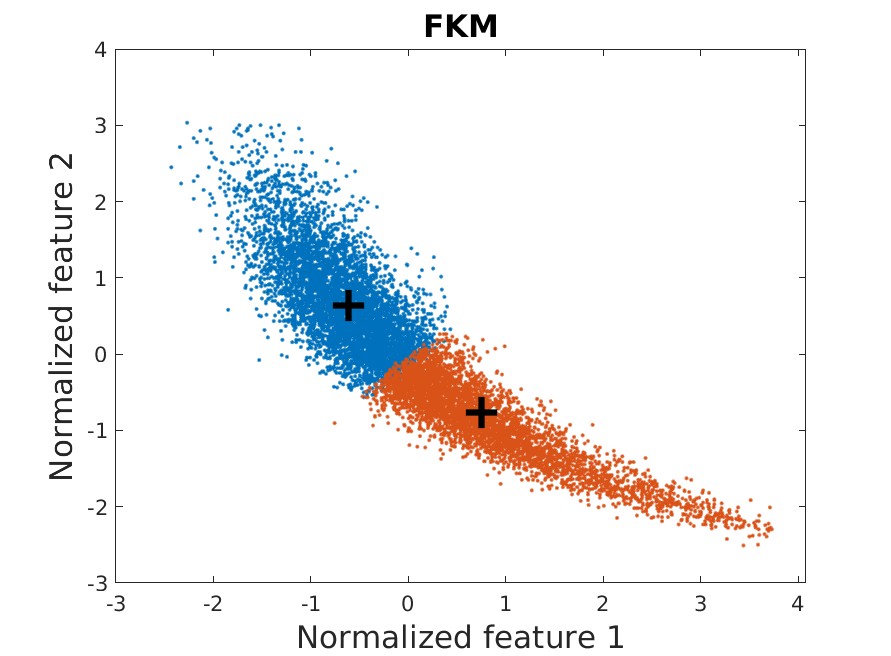}\label{fig:machine_fkm}}
    \subfloat[]{\includegraphics[width=0.23\textwidth]{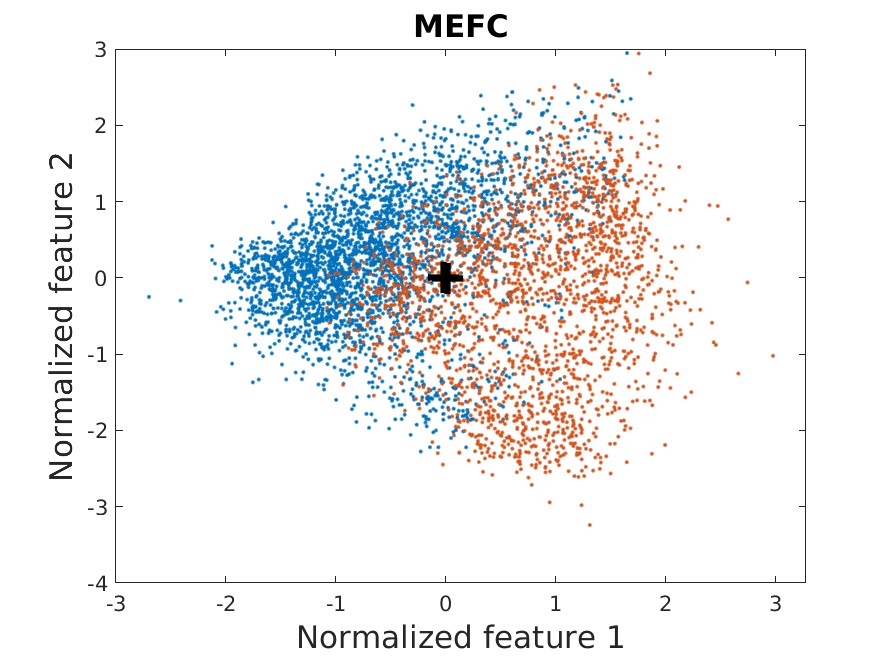}\label{fig:spam_mefc}}\\
    \subfloat[]{\includegraphics[width=0.23\textwidth]{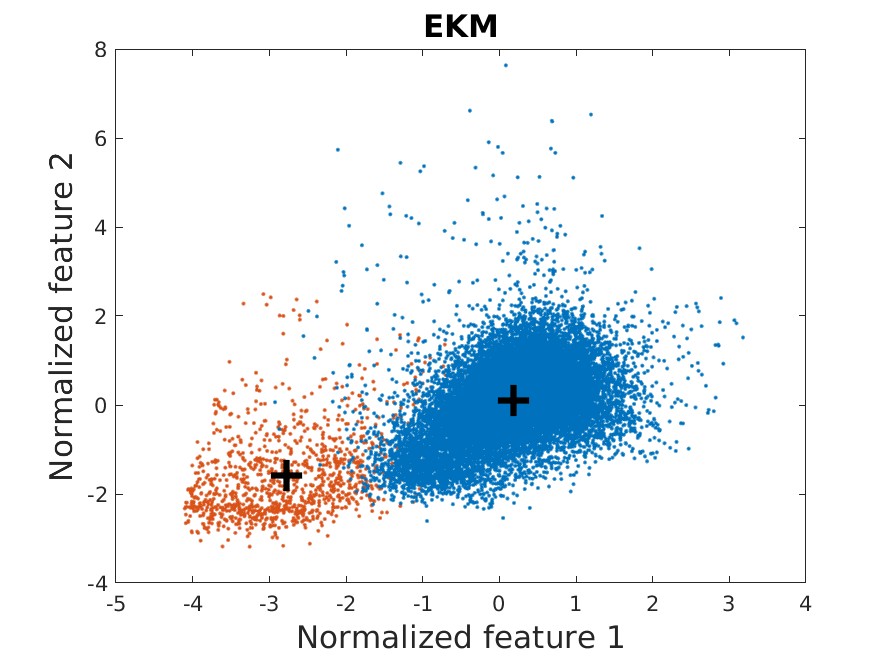}\label{fig:htru2_ekm}}
    \subfloat[]{\includegraphics[width=0.23\textwidth]{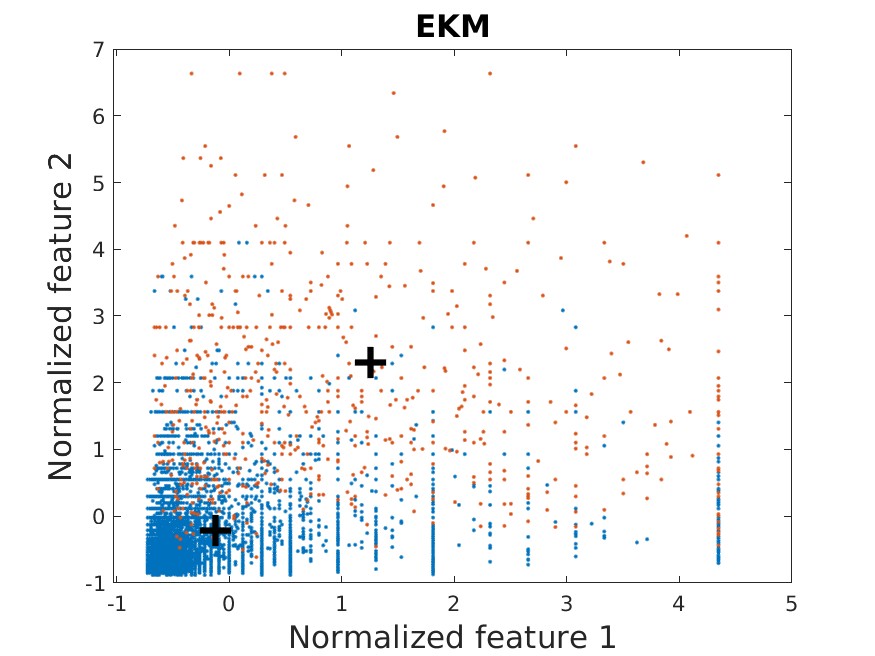}\label{fig:shill_ekm}}
    \subfloat[]{\includegraphics[width=0.23\textwidth]{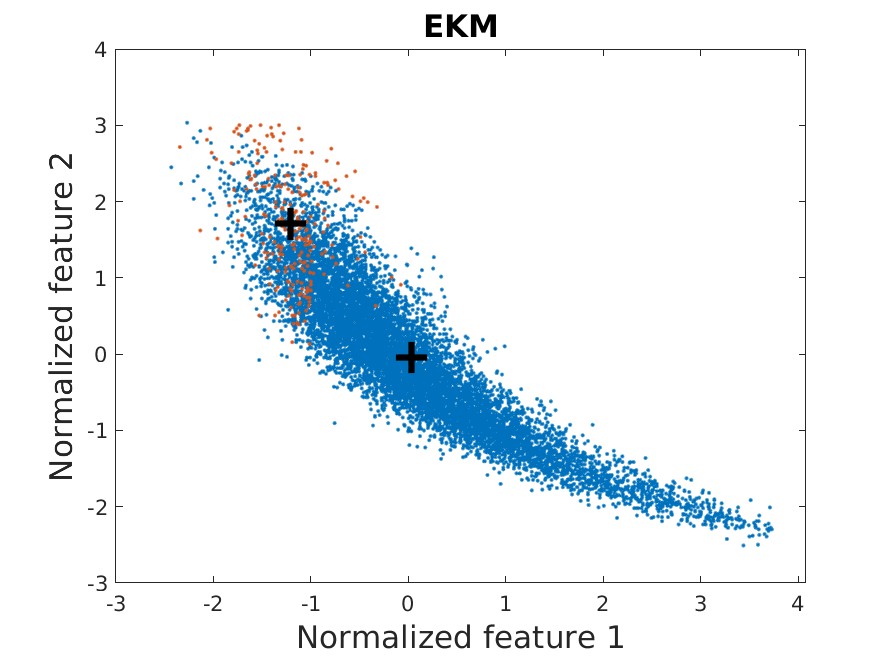}\label{fig:machine_ekm}}
    \subfloat[]{\includegraphics[width=0.23\textwidth]{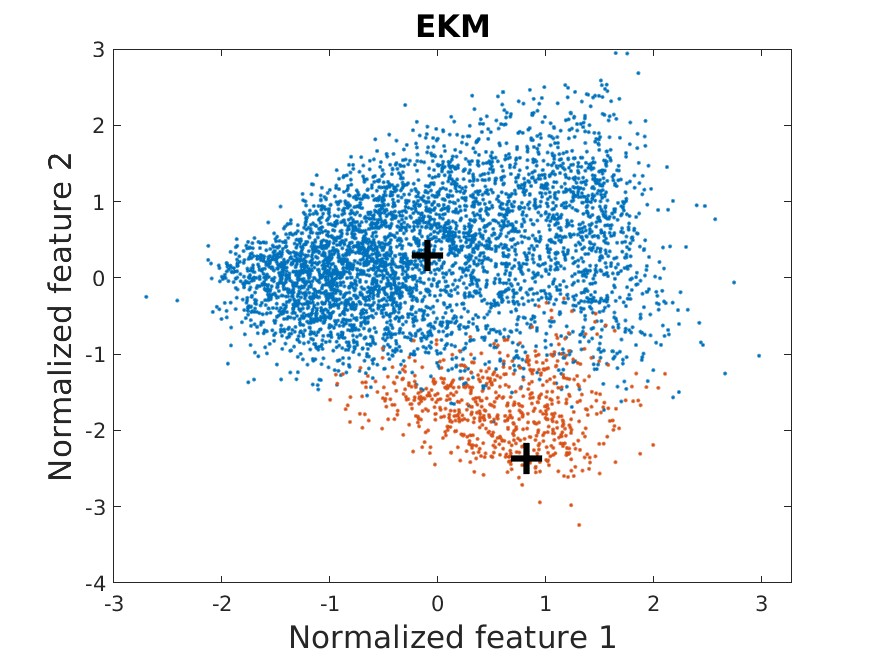}\label{fig:spam_ekm}}

 \caption{Scatter diagrams of some selected real imbalanced datasets (from left to right: Htru2, Shill Bidding, Machine Failure, and Bert-Embedded Spam). The first two features are used as coordinates for visualization. (a)-(d) Primitive scatter diagrams with reference class labels. (e)-(f) Clustering results of some benchmark algorithms. (i)-(l) Clustering results of the proposed EKM. The black crosses represent centroids obtained by each algorithm.}
\label{fig:imbalanced_datasets}
\end{figure*}
\subsection{Empirical Convergence and Algorithm Efficiency}
\label{sect5:convergence}
Table~\ref{table:iter_time} shows the average number of iterations and the average time per run (each algorithm is run 5000 times). The numbers in brackets in the table are the ranking numbers of the corresponding algorithms. AVK and MDK give the average and median number of iterations and computational time of each algorithm on all test datasets, respectively.

As evident from Table~\ref{table:iter_time}, AVK and MDK are consistent except that EKM is faster than FKM in AVK and slower in MDK. This is because FKM is slow on the Bert-Embedded Spam dataset. HKM is the fastest algorithm as it does not require membership calculation. MEFC comes second, followed by FKM and EKM. EKM is 30\% faster than FKM in AVK and slower than FKM by 9\% in MDK. In terms of absolute speed, MEFC, FKM, and EKM operate within the same order of magnitude. This aligns with their identical time complexity of $O(NK^2P)$ per iteration, where $N$ is the instance number, $K$ is the cluster number, and $P$ is the data dimension or feature number. The table also demonstrates the empirical convergence of EKM, averaging 44 iterations or a median of 20 iterations to reach the specified convergence~\eqref{convergence_condition}. In summary, EKM has a low computational cost.

\begin{table*}[!t]
\caption{
Average number of iterations and average calculation time of each algorithm
\label{table:iter_time}}
\centering
{%
    \resizebox{0.95\textwidth}{!}
    {
        {
        \begin{tabular}{cccccccccc} \hline
		Dataset&Measurement&HKM &FKM &MEFC  &PFKM&csiFKM &siibFKM  &FWPKM&EKM   \\ \hline
            \multirow{2}{*}{Data-A} 
                                &Iter &17.3     &13.7     &29.0     &33.5     &30.7     &28.0     &44.6     &12.1\\  
                                &Time &0.005(1) &0.008(3) &0.013(4) &0.033(7) &0.021(6) &0.277(8) &0.016(5) &0.007(2)\\ \hline
            \multirow{2}{*}{Data-B}
                                &Iter &9.2      &13.2     &30.1     &28.8     &15.1     &17.4     &24.1     &17.5\\  
                                &Time &0.003(1) &0.008(2) &0.013(6) &0.031(7) &0.011(5) &0.168(8) &0.010(4) &0.009(3)\\ \hline
            \multirow{2}{*}{Data-C}
                                &Iter &16.3     &17.6     &36.0     &23.9     &35.0     &42.1     &73.7     &23.5\\  
                                &Time &0.008(1) &0.014(2) &0.023(4) &0.048(7) &0.040(6) &0.624(8) &0.035(5) &0.017(3)\\ \hline
            \multirow{2}{*}{Data-D}
                                &Iter &14.8     &23.6     &25.4    &39.8      &24.5     &25.2     &7.3      &15.6\\  
                                &Time &0.033(2) &0.091(5) &0.082(4) &0.364(7) &0.142(6) &1.867(8) &0.030(1) &0.058(3)\\ \hline
            \multirow{2}{*}{IS}
                                &Iter &15.9     &40.1     &20.9     &53.0     &49.8     &32.1     &29.6    &24.1\\  
                                &Time &0.021(1) &0.110(5) &0.058(2) &0.218(7) &0.104(4) &0.824(8) &0.152(6) &0.074(3)\\ \hline
            \multirow{2}{*}{Seeds}
                                &Iter &8.5      &10.4     &11.1     &13.3     &12.6     &26.0     &32.5     &12.0\\  
                                &Time &0.001(1) &0.001(3) &0.001(2) &0.003(6) &0.002(4) &0.028(8) &0.005(7) &0.002(5)\\ \hline
            \multirow{2}{*}{Heart Disease}
                                &Iter &9.9      &269.3    &12.5     &22.7     &500.0    &498.8    &3.3      &18.7\\  
                                &Time &0.002(1) &0.074(5) &0.003(3) &0.081(6) &2.095(8) &1.646(7) &0.002(2) &0.005(4)\\ \hline
            \multirow{2}{*}{Wine}
                                &Iter &7.0      &13.5     &10.6     &25.3     &15.7     &40.4     &19.4     &11.7\\  
                                &Time &0.000(1) &0.002(4) &0.001(2) &0.004(6) &0.002(5) &0.037(8) &0.005(7) &0.001(3)\\ \hline
            \multirow{2}{*}{Rice}
                                &Iter &8.4      &8.7      &8.9      &12.7     &14.0     &17.8     &72.8     &9.1\\  
                                &Time &0.004(1) &0.009(4) &0.006(2) &0.021(6) &0.011(5) &0.214(8) &0.072(7) &0.009(3)\\ \hline
            \multirow{2}{*}{WDBC}
                                &Iter &7.4      &10.8     &10.6     &32.4     &500.0    &500.0    &16.2     &9.9\\  
                                &Time &0.001(1) &0.003(4) &0.002(2) &0.009(5) &2.307(8) &0.933(7) &0.013(6) &0.002(3)\\ \hline
            \multirow{2}{*}{Zoo}
                                &Iter &4.4      &31.0     &23.8     &35.0     &34.4     &37.8    &2.3      &498.9\\  
                                &Time &0.001(1) &0.005(4) &0.003(3) &0.009(6) &0.006(5) &0.048(7) &0.001(2) &0.069(8)\\ \hline
            \multirow{2}{*}{Glass}
                                &Iter &8.5      &27.2     &29.4     &18.8     &31.4     &45.0    &25.7     &25.3\\  
                                &Time &0.001(1) &0.006(5) &0.006(2) &0.009(7) &0.006(4) &0.096(8) &0.007(6) &0.006(3)\\ \hline
            \multirow{2}{*}{Ecoli}
                                &Iter &14.1     &42.9     &23.6     &32.2     &40.5     &45.2     &3.0      &22.9\\  
                                &Time &0.002(2) &0.019(6) &0.010(3) &0.029(7) &0.015(5) &0.197(8) &0.002(1) &0.011(4)\\ \hline
            \multirow{2}{*}{Htru2}
                                &Iter &12.7     &33.8    &15.2     &21.5      &500.0     &500.0     &52.5     &10.8\\  
                                &Time &0.046(1) &0.175(4) &0.067(3) &0.357(5) &17.884(7) &27.836(8) &0.450(6) &0.059(2)\\ \hline
            \multirow{2}{*}{Shill Bidding}
                                &Iter &12.0     &15.0     &24.3     &22.7     &36.8    &40.4      &25.6     &22.3\\  
                                &Time &0.011(1) &0.026(2) &0.029(3) &0.064(7) &0.046(5) &0.794(8) &0.056(6) &0.033(4)\\ \hline
            \multirow{2}{*}{Anuran Calls}
                                &Iter &22.5     &35.7     &24.1     &39.3     &42.6     &45.4     &24.0     &44.3\\  
                                &Time &0.339(1) &0.527(3) &0.377(2) &1.502(6) &0.614(4) &5.119(8) &2.089(7) &0.719(5)\\ \hline
            \multirow{2}{*}{Occupancy Detection}
                                &Iter &11.5     &25.6     &28.6     &24.6     &28.0     &194.9     &19.8     &14.8\\  
                                &Time &0.032(1) &0.132(5) &0.097(3) &0.291(7) &0.109(4) &12.090(8) &0.163(6) &0.067(2)\\ \hline
            \multirow{2}{*}{Machine Failure}
                                &Iter &13.4     &22.0     &11.9     &127.4    &422.1    &34.5     &122.6     &22.5\\  
                                &Time &0.015(1) &0.055(4) &0.022(2) &0.442(6) &0.724(7) &1.058(8) &0.331(5)  &0.049(3)\\ \hline
            \multirow{2}{*}{Pulsar Cleaned}
                                &Iter &25.3     &17.1     &18.4     &60.7     &500.0    &500.0     &50.2     &21.4\\  
                                &Time &0.045(1) &0.068(3) &0.052(2) &0.435(6) &12.560(7) &22.953(8) &0.268(5) &0.075(4)\\ \hline
            \multirow{2}{*}{Bert-Embedded Spam}
                                &Iter &31.6     &500.0    &498.7    &21.2     &500.0    &500.0    &28.5     &41.6\\  
                                &Time &0.015(1) &0.618(5) &0.388(4) &0.642(6) &4.866 (7) &8.520(8) &0.030(2) &0.045(3)\\ \hline \hline
            \multirow{2}{*}{AVK}
                                &Iter &13.5     &58.5     &44.7     &34.4     &166.4    &158.3    &33.9     &44.0\\  
                                &Time &0.029(1) &0.098(4) &0.063(2) &0.230(6) &2.078(7) &4.266(8) &0.187(5) &0.066(3)\\ \hline
            \multirow{2}{*}{MDK}
                                &Iter &12.4     &22.8     &23.7     &27.1      &35.9     &41.3     &25.6    &20.0\\  
                                &Time &0.007(1) &0.023(3) &0.018(2) &0.056(6) &0.075(7) &0.809(8) &0.030(5) &0.025(4)\\ \hline
	\end{tabular}
 
        }
   }
 }
\end{table*}

\section{Deep Clustering}
\label{sect6}
Clustering algorithms using Euclidean distance for defining similarity face a challenge when clustering high-dimensional data, such as images, due to the distance differences between point pairs vanishes in high-dimensional space~\cite{beyer1999nearest}. Deep clustering is a technique that addresses this issue by employing DNNs to map high-dimensional data to low-dimensional representation. Many deep clustering methods combine DNNs with HKM, e.g.,~\cite{yang2017towards,caron2018deep,fard2020deep}. However, this design is less favorable to imbalanced data~\cite{caron2018deep}. In this section, we show that EKM is a better alternative for deep clustering of imbalanced data.

\subsection{Optimization Procedure}
\begin{figure}[t!]
    \centering
    \includegraphics[width=0.45\textwidth]{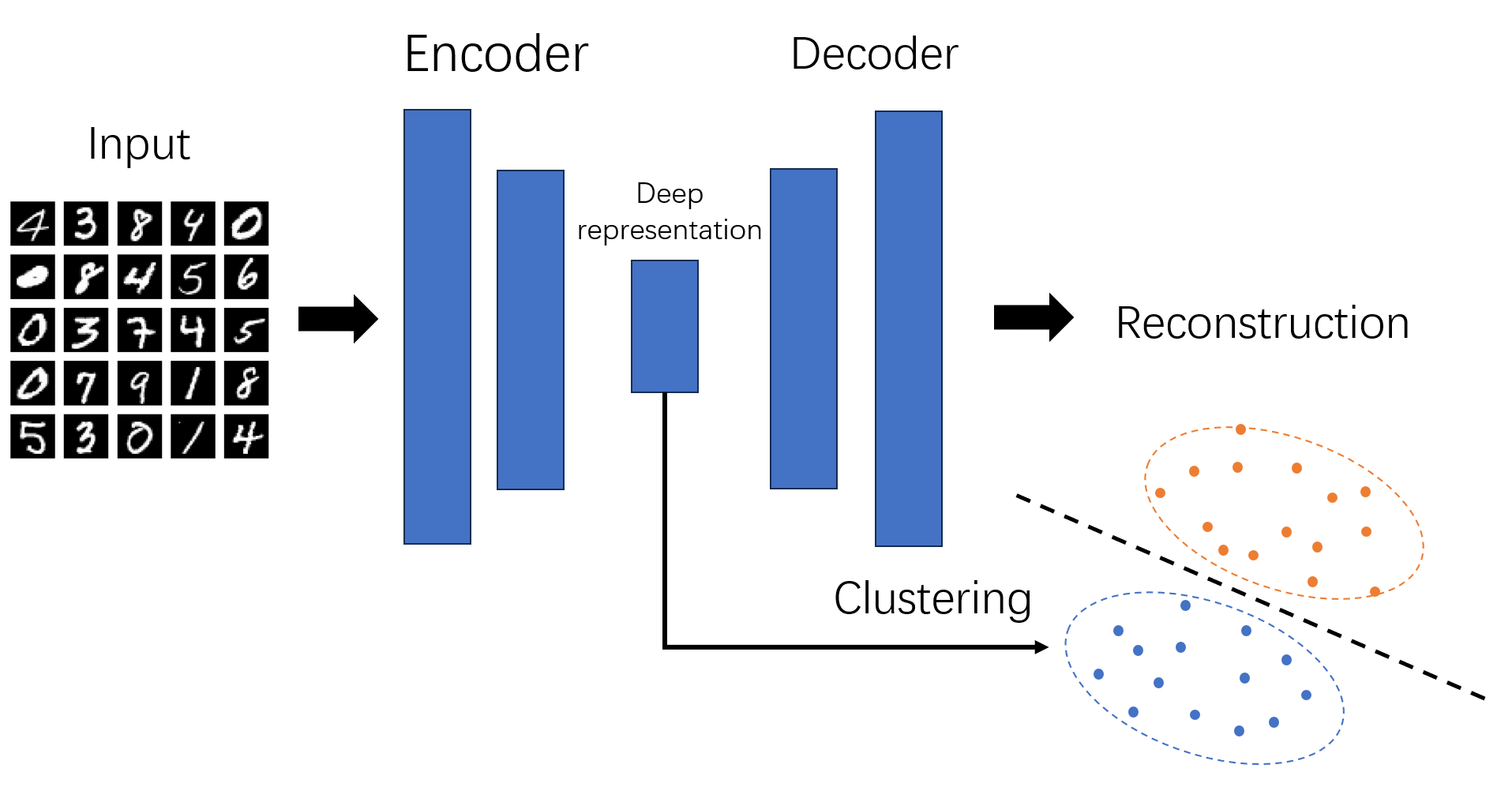}
    \caption{Illustration of the DCN framework~\cite{yang2017towards}. The parameters of the encoder, decoder, and clustering model are optimized jointly to minimize the reconstruction error and clustering error.}
    \label{fig:dcn_illustration}
\end{figure}

Deep clustering network (DCN)~\cite{yang2017towards} is a popular deep clustering framework. As illustrated in Fig.~\ref{fig:dcn_illustration}, DCN maps high-dimensional data to low-dimensional representation through an autoencoder network. An autodecoder follows the autoencoder, mapping the representation back to the original high-dimensional space (i.e., reconstruction). To ensure that the low-dimensional representation maintains the primary information of the original data, the autoencoder and the autodecoder are jointly trained to minimize the reconstruction error. Additionally, to make the low-dimensional representation have a clustering-friendly structure, a clustering error is minimized along with the reconstruction error. DCN uses an alternating optimization algorithm to minimize the total error, and the optimization process is described below.

First, the autoencoder and the autodecoder are jointly trained to reduce the following loss for the incoming data $\mathbf{x}_n$:
\begin{equation}
\label{DCNloss_encoder_decoder}
    \min_{\bm{\theta}_e,\bm{\theta}_d} L^n=l(\bm{g}(\bm{f}(\mathbf{x}_n)),\mathbf{x}_n)+\beta \|\bm{f}(\mathbf{x}_n)-\mathbf{C}\mathbf{s}_n\|_2^2,
\end{equation}
where $\bm{f}(\cdot)$ and $\bm{g}(\cdot)$ are simplified symbols for autoencoder $\bm{f}(\cdot;\,\bm{\theta}_e)$ and autodecoder $\bm{g}(\cdot;\,\bm{\theta}_d)$, respectively. The function $l(\cdot)$ is the least-squares loss $l(\hat{\mathbf{x}},\mathbf{x})=\|\hat{\mathbf{x}}-\mathbf{x}\|_2^2$ to measure the reconstruction error. The assignment vector $\mathbf{s}_n\in\mathbb{R}^{K\times 1}$ has only one non-zero element and $\mathbf{1}\trans \mathbf{s}_n=1$, indicating which cluster the $n$-th data belongs to, and the $k$-th column of $\mathbf{C}=[\mathbf{c}_1,\cdots,\mathbf{c}_K]$ is the centroid of the $k$-th cluster. The parameter $\beta$ balances the reconstruction error versus the clustering error. Then, the network parameters $\{\bm{\theta}_e,\bm{\theta}_d\}$ are fixed, and the parameters $\{\mathbf{s}_n\}$ are updated as follows:
\begin{equation}
\label{DCN_update_assignment}
    s_{j,n}\leftarrow
    \begin{cases}
        1, & \text{if}\, j=\argmin_{k=\{1,\cdots,K\}} \|\bm{f}(\mathbf{x}_n)-\mathbf{c}_k\|_2,\\
        0, & \text{otherwise,}
    \end{cases}
\end{equation}
where $s_{j,n}$ is the $j$-th element of $\mathbf{s}_n$. Finally, $\mathbf{C}$ is updated by the batch-learning version of the HKM algorithm:
\begin{equation}
\label{DCN_KM_update_centroids}
    \mathbf{c}_k \leftarrow \mathbf{c}_k + (1/m_k^n)(\bm{f}(\mathbf{x}_n)-\mathbf{c}_k)s_{k,n},
\end{equation}
where $m_k^n$ is the number of samples assigned to the $k$-th cluster before the incoming data $\mathbf{x}_n$, controlling the learning rate of the $k$-th centroid. Overall, the optimization procedure of DCN alternates between updating network parameters $\{\bm{\theta}_e,\bm{\theta}_d\}$ by solving~\eqref{DCNloss_encoder_decoder} and updating HKM parameters $\{\mathbf{C},\{\mathbf{s}_n\}\}$ by~\eqref{DCN_update_assignment} and~\eqref{DCN_KM_update_centroids}.

However, the centroid updating rule~\eqref{DCN_KM_update_centroids} is problematic for imbalanced data due to the uniform effect. To address this issue, we propose to replace~\eqref{DCN_KM_update_centroids} with the batch-learning version of EKM:
\begin{equation}
\label{DCN_EKM_update_centroids}
\begin{aligned}
    \mathbf{c}_k &\leftarrow \mathbf{c}_k - (1/m_k^n)\partial_{\mathbf{c}_k}J_B(\mathbf{c}_1,\cdots,\mathbf{c}_K)\\
    & = \mathbf{c}_k+(1/m_k^n)\frac{e^{-\alpha d_{kn}}}{\sum_{i=1}^K e^{-\alpha d_{in}}} \big[1-\alpha(d_{kn}\\
    &-\frac{\sum_{i=1}^K d_{in}e^{-\alpha d_{in}}}{\sum_{i=1}^K e^{-\alpha d_{in}}})\big](\bm{f}(\mathbf{x}_n)-\mathbf{c}_k),
\end{aligned}
\end{equation}
where $d_{kn}=\frac{1}{2}\|\bm{f}(\mathbf{x}_n)-\mathbf{c}_k\|_2^2$. There are other details and tricks to implement DCN, such as the initialization of the networks. We only introduce the part related to our contribution, and kindly refer to~\cite{yang2017towards} for more implementation details.

\subsection{Clustering Performance on MNIST}
To implement DCN, we refer to the code one of its authors provided, available at \url{https://github.com/boyangumn/DCN-New}. We use the default neural network structure and hyperparameters. In particular, the dimension of the low-dimensional representation is set to ten, and the parameter $\beta$ is set to one. The smoothing parameter $\alpha$ is tuned on the representation obtained by the initialized DCN network according to the strategy introduced in Section~\ref{sect3_alpha}.

We first evaluate the algorithm's performance on a balanced dataset. The considered dataset is the full MNIST~\cite{lecun1998mnist}, which contains 70,000 gray images of handwritten digits from 0 to 9. Each digit has approximately 7,000 images and each image has $28 \times 28=784$ pixels. We set the smoothing parameter of EKM to $\alpha=5\mathrm{e}-3$. The clustering results are presented in TABLE~\ref{table: eval_fullmnist}. We compare the proposed DCN+EKM with DCN+HKM and stacked autoencoder (SAE). SAE is a specific version of DCN that only minimizes the reconstruction error. Thus, the learned representation by SAE does not have a clustering-friendly structure. The results show that DCN outperforms SAE, highlighting the importance of a clustering-friendly structure. We can also see that DCN+EKM performs similarly to DCN+HKM. We omit the results of DCN+FKM and DCN+MEFC due to save space. Their performance is not better than DCN+HKM.

Then we evaluate the algorithm performance on an imbalanced dataset. The considered dataset is derived from the full MNIST by removing the training images of digits 1 to 9. This imbalanced dataset (we call imbalanced MNIST) contains 15,923 images, of which approximately 7,000 are digit 0, while the remaining digits (1 to 9) each have about 1,000 images. We set the smoothing parameter of EKM to $\alpha=3.8\mathrm{e}-3$. The results are summarized in TABLE~\ref{table: eval_reducedmnist}. We can see that DCN+EKM greatly outperforms other algorithms. This finding implies that the EKM-friendly structure is crucial for deep clustering of imbalanced data. 

We map the ten-dimensional representation obtained by DCN to a two-dimensional space by t-SNE~\cite{van2008visualizing} for visualization. The representation learned on the full MNIST dataset is displayed in Fig.~\ref{fig:tsne2_DCN_fullmnist} and the representation learned on the imbalanced MNIST dataset is displayed in Fig.~\ref{fig:tsne2_DCN_imbalancedMNIST}. The visualization results suggest that the deep representation obtained by DCN+EKM is more discriminative than that obtained by DCN+HKM as the former has clearer inter-cluster margins, especially for those obtained from the imbalanced MNIST dataset. This should be attributed to the centroid repulsion mechanism of EKM, which makes data farther away in representation space. We also observe from Fig.~\ref{fig:tsne2_DCN_imbalancedMNIST} that EKM successfully identifies the large class (digit 0) from other small classes without referring to the true labels, while HKM incorrectly divides the large class into four clusters to balance the cluster sizes.

\begin{table}[!t]
\caption{
Evaluation on full MNIST
\label{table: eval_fullmnist}}
\centering
\adjustbox{}{%
	\begin{tabular}{c|cccc}\hline
		Methods& 
		SAE+HKM &DCN+HKM &SAE+EKM &DCN+EKM\\ \hline
		NMI &0.725& 0.798& 0.711&  \textbf{0.813}\\   
		ARI &0.667&\textbf{0.744}& 0.642&  0.731\\   
		ACC &0.795& \textbf{0.837}& 0.782&  0.808\\  \hline
	\end{tabular}
	}
\end{table}

\begin{table}[!t]
\caption{
Evaluation on imbalanced MNIST
\label{table: eval_reducedmnist}}
\centering
\adjustbox{}{%
	\begin{tabular}{c|cccc}\hline
		Methods& 
		SAE+HKM &DCN+HKM &SAE+EKM &DCN+EKM\\ \hline
		NMI &0.551 &0.584 &0.583 &\textbf{0.701}\\   
		ARI &0.317 &0.325 &0.396 &\textbf{0.826}\\   
		ACC &0.413 &0.434 &0.497 &\textbf{0.784}\\  \hline
	\end{tabular}
	}
\end{table}

\begin{figure}[!t]
\centering
 \subfloat[]{\includegraphics[width=0.22\textwidth]{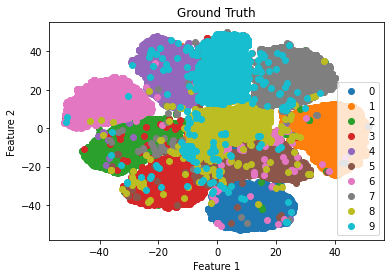}}
 \subfloat[]{\includegraphics[width=0.22\textwidth]{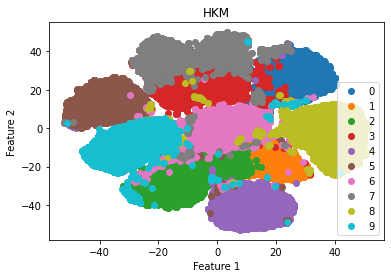}}
  \hfill
\subfloat[]{\includegraphics[width=0.22\textwidth]{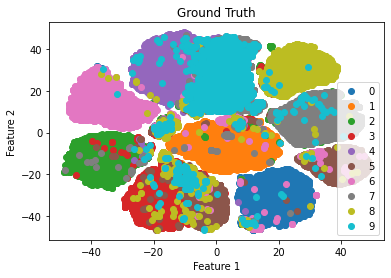}}
\subfloat[]{\includegraphics[width=0.22\textwidth]{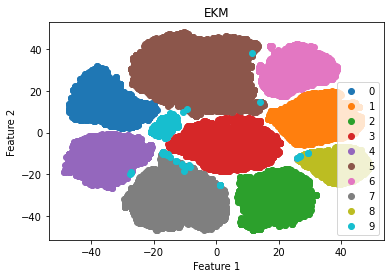}}
 \caption{The t-SNE visualization of DCN representations learned on the full MNIST dataset. (a) Representations obtained by DCN+HKM, colored by reference labels. (b)  Representations obtained by DCN+HKM, colored by labels generated by HKM. (c) Representations obtained by DCN+EKM, colored by reference labels. (d) Representations obtained by DCN+EKM, colored by labels generated by EKM.}
\label{fig:tsne2_DCN_fullmnist}
\end{figure}

\begin{figure}[!t]
\centering
 \subfloat[]{\includegraphics[width=0.22\textwidth]{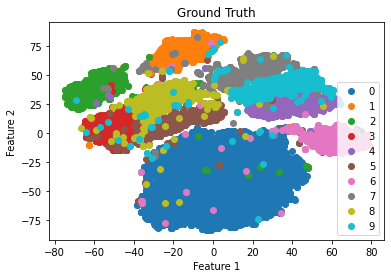}
 \label{fig:tsne2_true_label_km}}
 \subfloat[]{\includegraphics[width=0.22\textwidth]{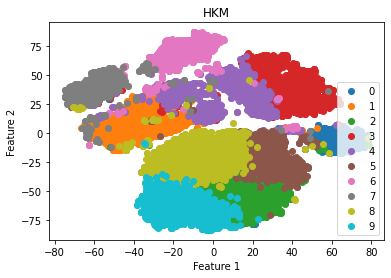}
 \label{fig:tsne2_est_label_km}}
 \hfill
\subfloat[]{\includegraphics[width=0.22\textwidth]{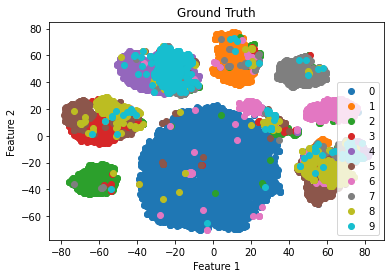}
\label{fig:tsne2_true_label_ekm}}
\subfloat[]{\includegraphics[width=0.22\textwidth]{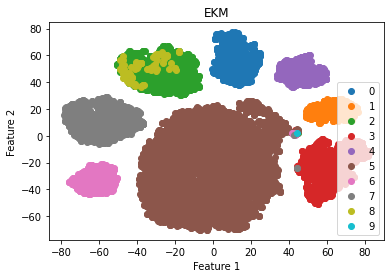}
\label{fig:tsne2_est_label_ekm}}
 \caption{The t-SNE visualization of DCN representations learned on the imbalanced MNIST dataset. (a) Representations obtained by DCN+HKM, colored by reference labels. (b)  Representations obtained by DCN+HKM, colored by labels generated by HKM. (c) Representations obtained by DCN+EKM, colored by reference labels. (d) Representations obtained by DCN+EKM, colored by labels generated by EKM.}
\label{fig:tsne2_DCN_imbalancedMNIST}
\end{figure}

\section{Conclusion}
\label{sect7}
This paper presents equilibrium K-means (EKM), a novel clustering algorithm with a new centroid repulsion mechanism effective for imbalanced data. EKM is simple, interpretable, and scalable to large datasets. Experimental results on datasets from various domains show that EKM outperforms HKM, FKM, and other state-of-the-art centroid-based algorithms on imbalanced datasets and performs comparably on balanced datasets. We also demonstrate that EKM is a better alternative to HKM and FKM in deep clustering when dealing with imbalanced data. Furthermore, we reformulate HKM, FKM, and EKM in a general form of gradient descent. We encourage readers to study more properties of this general form. Combining EKM with kernels, multi-prototype mechanisms, and other techniques to deal with more complex data structures is also an important research direction. We believe that these explorations will promote the development of EKM and further solve the data imbalance issue.
\begin{appendices}
\section{Proof of Theorem~\ref{theorem_convergence}}
For concise and tidy, we denote the partial derivative $\frac{\partial h}{\partial d_k}(d_{1n}^{(\tau)},\cdots,d_{Kn}^{(\tau)})$ as $\frac{\partial h}{\partial d_k}\vert_{n,{\tau}}$, $h(d_{1n},\cdots,d_{Kn})$ as $h$, and $h(d_{1n}^{(\tau)},\cdots,d_{Kn}^{(\tau)})$ as $h^{(\tau)}$. Since $h$ is a concave function at $[0,\,+\infty)^K$, we have
\begin{equation}
\begin{aligned}
   h\le h^{(\tau)}+\sum_{k=1}^K \frac{\partial h}{\partial d_k}\bigg |_{n,{\tau}}\cdot(d_{kn}-d_{kn}^{(\tau)}), 
\end{aligned}
\end{equation}
which holds for any $n\in\{1,\cdots,N\}$ and $d_{1n},\cdots, d_{Kn}\in [0,\,+\infty)$. Summing over $n$, it follows that
\begin{multline}
    J(\mathbf{c}_1,\cdots,\mathbf{c}_K) \le \sum_{n=1}^N \bigg[h^{(\tau)}+\sum_{k=1}^K \frac{\partial h}{\partial d_k}\bigg |_{n,{\tau}}\cdot(d_{kn}-d_{kn}^{(\tau)})\bigg]\\
    =J(\mathbf{c}_1^{(\tau)},\cdots,\mathbf{c}_K^{(\tau)})+\frac{1}{2}\sum_{k=1}^K\sum_{n=1}^N \bigg(\frac{\partial h}{\partial d_k}\bigg|_{n,{\tau}}\cdot \big(\|\mathbf{x}_n-\mathbf{c}_k\|_2^2\\
    -\|\mathbf{x}_n-\mathbf{c}_k^{(\tau)}\|_2^2\big)\bigg).
\end{multline}

Denote the function on the right side of the inequality as $M(\mathbf{c}_1,\cdots,\mathbf{c}_K)$. With the boundness condition, we have $\sum_{n=1}^N \frac{\partial h}{\partial d_k}\big|_{n,\tau}>0$ for any $k$ and $\tau$, thus, $M$ is a quadratic function and strictly convex, with the unique global minimizer at $(\mathbf{c}_1^{(\tau+1)},\cdots,\mathbf{c}_K^{(\tau+1)})$ defined by~\eqref{smooth_km_sgd}. Denote $J(\mathbf{c}_1^{(\tau)},\cdots,\mathbf{c}_K^{(\tau)})$ as $J^{(\tau)}$, $M(\mathbf{c}_1^{(\tau)},\cdots,\mathbf{c}_K^{(\tau)})$ as $M^{(\tau)}$, and $\partial_{\mathbf{c}_k} J(\mathbf{c}_1^{(\tau)},\cdots,\mathbf{c}_K^{(\tau)})$ as $\partial_{\mathbf{c}_k} J^{(\tau)}$. Each iteration of the centroid will reduce the objective function by
\begin{equation}
\begin{aligned}
\label{theorem1_proof_inequality}
    &J^{(\tau)}-J^{(\tau+1)}\\
    &\ge J^{(\tau)}-M^{(\tau+1)}\\
    &=\frac{1}{2}\sum_{k=1}^K \sum_{n=1}^N \frac{\partial h}{\partial d_k}\bigg|_{n,\tau}\cdot\big(\|\mathbf{x}_n-\mathbf{c}_k^{(\tau)}\|_2^2-\|\mathbf{x}_n-\mathbf{c}_k^{(\tau+1)}\|_2^2\big)\\
    &\text{substituting~\eqref{smooth_km_sgd}}\\
    &=\frac{1}{2}\sum_{k=1}^K\sum_{n=1}^N \frac{\partial h}{\partial d_k}\bigg|_{n,\tau} \cdot \big( \|\mathbf{x}_n-\mathbf{c}_k^{(\tau)}\|_2^2-\|\mathbf{x}_n-\mathbf{c}_k^{(\tau)}\\
    &+\gamma_k^{(\tau)}\partial_{\mathbf{c}_k}J^{(\tau)}\|_2^2\big)\\
    &=\frac{1}{2}\sum_{k=1}^K\sum_{n=1}^N \frac{\partial h}{\partial d_k}\bigg|_{n,\tau} \cdot \big(-(\gamma_k^{(\tau)})^2\|\partial_{\mathbf{c}_k}J^{(\tau)}\|_2^2\\
    &-2\gamma_k^{(\tau)}(\mathbf{x}_n-\mathbf{c}_k^{(\tau)})\trans \partial_{\mathbf{c}_k}J^{(\tau)}\big)\\
    &\text{using $\gamma_k^{(\tau)}=\frac{1}{\sum_{n=1}^N \frac{\partial h}{\partial d_k}|_{n,\tau}}$}\\
    &\text{and $\partial_{\mathbf{c}_k} J^{(\tau)}=-\sum_{n=1}^N \frac{\partial h}{\partial d_k}\cdot (\mathbf{x}_n-\mathbf{c}_k^{(\tau)})$}\\
    &=\sum_{k=1}^K -\frac{1}{2}\gamma^{(\tau)}_k\|\partial_{\mathbf{c}_k}J^{(\tau)}\|_2^2+\sum_{k=1}^K \gamma_k^{(\tau)}\|\partial_{\mathbf{c}_k}J^{(\tau)}\|_2^2\\
    &=\sum_{k=1}^K \frac{1}{2} \gamma_k^{(\tau)}\|\partial_{\mathbf{c}_k}J^{(\tau)}\|_2^2\\
    &\text{with the boundness condition of $\gamma_k^{(\tau)}$}\\
    &\ge \frac{1}{2}\epsilon\sum_{k=1}^K \|\partial_{\mathbf{c}_k}J^{(\tau)}\|_2^2,
\end{aligned}
\end{equation}

where $\epsilon$ is a positive number. Hence, the sequence $(J^{(1)},J^{(2)},\cdots)$ is non-increasing, and with the boundness condition that $h>-\infty$, we have $\lim_{\tau\to +\infty} (J^{(\tau)}-J^{(\tau+1)})\to 0$. If the left side of the inequality~\eqref{theorem1_proof_inequality} converges to zero, the right side of the inequality also converges to zero since it is non-negative. Consequently, we have $\lim_{\tau\to+\infty} \partial_{\mathbf{c}_k}J^{(\tau)}\to 0$ for all $k$. Therefore, the sequence $(\mathbf{c}_k^{(\tau)})$ converges to a stationary point of the objective function $J$. Because $(J^{(1)},J^{(2)},\cdots)$ is non-increasing, only (local) minimizers or saddle points appear as limit points.

\end{appendices}

\bibliographystyle{IEEEtran}
\bibliography{IEEEabrv,refs}

\begin{IEEEbiographynophoto}{Yudong He}
received the B.Sc. degree from The University of Science and Technology of China, China, in 2017, and the Ph.D. degree from The Hong Kong University of
Science and Technology in 2022. He is currently a Post-doctoral fellow with the Department
of Industry Engineering and Decision Analytics, The Hong Kong University of
Science and Technology. His research interests include
audio signal processing, compressed sensing, and machine learning.
\end{IEEEbiographynophoto}

\end{document}